\newcommand{\argmin}{\operatornamewithlimits{argmin }}
\newcommand{\minimize}{\operatornamewithlimits{min }}
\newcommand{\subjto}{\operatornamewithlimits{\text{subject to }}}
\newcommand{\ones}{\mathbf 1}
\newcommand{\reals}{{\mathbb R}}
\newcommand{\prox}{\operatorname{prox}}
\author{}
\begin{document}

\title{Rank Aggregation with Confidence for Large Scale Comparison Data}
\author{Filipa Valdeira\thanks{NOVA Laboratory for Computer
Science and Informatics (NOVA LINCS), NOVA School of Science and Technology (NOVA FCT). E-mails: \{f.valdeira, claudia.soares\}@fct.unl.pt} \thanks{Center for Mathematics and Applications (NOVA Math), NOVA School of Science and Technology (NOVA FCT). }
\and Cláudia Soares\footnotemark[1]}

\date{}

\maketitle

\begin{abstract}
In this work, we leverage a generative data model considering comparison noise to develop a fast, precise, and informative ranking algorithm from pairwise comparisons that produces a measure of confidence on each comparison. The problem of ranking a large number of items from noisy and sparse pairwise comparison data arises in diverse applications, like ranking players in online games, document retrieval or ranking human perceptions. Although different algorithms are available, we need fast, large-scale algorithms whose accuracy degrades gracefully when the number of comparisons is too small. Fitting our proposed model entails solving a non-convex optimization problem, which we tightly approximate by a sum of quasi-convex functions and a regularization term. Resorting to an iterative reweighted minimization and the Primal-Dual Hybrid Gradient method, we obtain PD-Rank, achieving a better Kendall tau than comparing methods, even for 10\% of wrong comparisons in simulated data matching our data model and without the assumption of strong connectivity. In real data, PD-Rank requires less computational time to achieve the same Kendall tau than active learning methods.
\end{abstract}

\section{Introduction}
If early applications of the ranking problem were focused on voting scenarios \cite{article:Borda_1781,article:Condorcet_1785} and later on ranking of a small set of items \cite{article:PCM_EM}, modern needs call for the rank of a large number of items from noisy labels and missing information. We can find a wide number of applications ranging from ranking players in different kinds of competitions \cite{article:PCM_incomplete_tennis}, document retrieval \cite{article:Doc_retrieval}, recommender systems \cite{article:recommender_sys}, selection of patients \cite{article:kidney_transplant} or mapping urban safety perception \cite{article:Mapping_Urban_Perception}. 
\begin{figure}[tb!]
	\centering
	\includegraphics[width=0.9\columnwidth]{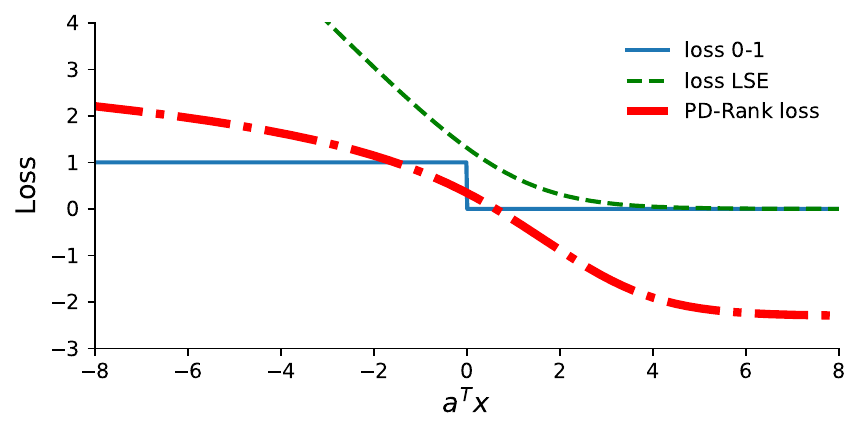}
	\caption{Comparison between the 0-1 loss, LSE and our PD-Rank loss (the logarithm of LSE). On the horizontal axis is the score difference between two items, according to the observed label, $a$. If the value $a^Tx$ is positive, the current ranking score respects the label, otherwise, the order is incorrect. While the 0-1 loss just distinguishes between correct and incorrect ordering, both approximations penalize according to the ``degree of correctness''. However, for values closer to and less than 0, the proposed PD-Rank loss is a tighter approximation to the 0-1 loss, being more robust to incorrect outliers.}\label{fig:loss}
\end{figure}

We consider the problem of ranking a complete set of items given partial rankings of subsets of these items (denoted as labels). Those labels are incomplete, possibly noisy orderings between groups of elements (listwise) or just pairs (pairwise). While pairwise orderings contain less information and require more observations, they are appealing due to their ease and speed of acquisition, as well as their tendency to reduce human assessment errors \cite{article:human_pair_class}. Thus, this is an interesting data collection mechanism to obtain labeling from humans, or when the number of items to rank is too large. In this sense, our work distinguishes from others considering access to a ranking of larger subsets of orderings  \cite{article:subset_Topk,article:subset_fotakis2021aggregating}.

Furthermore, the labels can be obtained from one single assessor or from several annotators. The latter is useful as in a lot of cases it is impractical or even impossible to get all comparisons from a single annotator. This motivates tools such as the Amazon Mechanical Turk, where large sets of pairwise comparisons from different annotators can be obtained at a low cost. However, this introduces new challenges, as annotators may assign different labels to the same pair (either because it is a subjective comparison or there are inevitable assessment errors), resulting in noisy labels. This leads to the so-called \textit{rank aggregation} problem, where we are presented with several rankings of the same items and aim at building a unique one.

Solutions to this problem often take a parametric approach, assuming that label noise depends on the scores of the items (if two items are more distant in the rank, then their labels will be subjected to less noise, and vice-versa). A popular model is the Thurstone’s model \cite{article:Thurstone1927} or the Bradley-Terry (BT) model \cite{article:BradleyTerry}, with several pairwise rank aggregation methods built on this assumption \cite{article:Spectral_MLE,article:BT_model,article:subset_Topk,article:asr,article:RankCentrality,article:ILSR,article:HodgeRank,article:rank_SVD_2021}. However, this assumption does not necessarily hold for all applications, and sometimes there is the need for a more general modeling of the noise \cite{article:SVM_RankAggregation_2014, article:NoisySorting_2007, article:EffRank_2013,article:Minimax} --- the approach we take in this work. We consider that noise is the same for any comparison, regardless of the ranking of the items being compared\footnote{This formulation can also be understood as the noisy sorting problem with resampling.}. Nonetheless, a disadvantage of the non-parametric models is that they provide only a ranking (ordered items) rather than a rating (scores for each item). Therefore, it is important to note that our method focuses exclusively on item ranking. 

Other non-parametric approaches to the pairwise rank aggregation problem are either not suitable for a large number of items (or only focus on retrieving the top-K items), are unable to accomodate individual annotator behaviour or assume that labels can be selected in an active manner, which may not always be possible. We expand on this in Section~\ref{sec:RelatedWork}.In this work, we focus on addressing this gap by developing a principled model for the rank aggregation problem from possibly noisy pairwise comparisons, made by multiple annotators (with possibly different behaviour). We propose an approximation that is close to the initial formulation and we propose an optimization method that scales well with the number of items and aims at retrieving the ranking of the full set of items.

\subsection{Contributions}
Our main contributions are:
\begin{itemize}
    \item we propose a \textbf{new formulation for the ranking problem}, based on a well-defined data model, that allows for personalized annotator behavior (Section~\ref{sec:ProbStatement});
    \item we propose a \textbf{sum of quasi-convex approximations} thus avoiding the drawbacks of the original optimization and a \textbf{fast specialized algorithm} that scales well with the number of items (Section~\ref{sec:OptimizationProb}); 
    \item we provide a measure of \textbf{confidence on each pairwise comparison}, obtained from weights learned during loss optimization (Section~\ref{sec:Approximation});
    \item when compared with state-of-the-art methods PD-Rank \textbf{outperforms competitors for independent noise model with limited observations} (up to at least 10 standard trials); is still \textbf{competitive for the BT noise model} for connected data and few observations; \textbf{scales well with the number of items}; compared with active learning methods, \textbf{achieves high Kendall values in less computational time}.
\end{itemize}

\section{Related work}\label{sec:RelatedWork}

\paragraph{Large scale solutions.} In the rank aggregation literature, we can find two main trends to address the large scale setting: either to carefully sample the observations so that a smaller number of them provides more information \cite{article:Crowd_BT_2013,article:Hybrid_MST_2018,article:Hodge-active_2017,article:ActiveRank_Heckel_2016}, or to use suitable optimization solutions, able to handle large data. The latter usually consider as input pairwise comparison matrices containing a ratio between each pair of items \cite{article:PCM_iter_LargeScale_Sparse}. Since individual labels are not directly incorporated, integrating annotator behavior becomes challenging. The former line of work relies on active learning techniques to sequentially pick informative pairs.  Except for the work in \cite{article:ActiveRank_Heckel_2016}, where the number of comparisons won at one step is used to select the next pairs to be compared, the other authors employ parametric assumptions. Both Crowd-BT \cite{article:Crowd_BT_2013} and Hybrid-MST \cite{article:Hybrid_MST_2018} are based on the Bradley-Terry model, with the latter having smaller time complexity and the former including modeling of annotator behavior. Hodge-active \cite{article:Hodge-active_2017} employs the HodgeRank model as well as the Bayesian information maximization to actively select the pair. Apart from the latter \cite{article:Hodge-active_2017}, which presents an unsupervised approach, the previous methods all assume that the pairs can be selected in an active manner, which may not always be the case. This is a motivation to look for other approaches, while considering that they are not mutually exclusive. 

\paragraph{Top-K retrieval and learning to rank.} In the context of large scale rank aggregation, there has been a growing interest in methods tailored for Top-K rank \cite{article:subset_Topk,article:rating_topk,article:adversarial_topk}, i.e. the retrieval of highest K items of a set. However, when the goal is to retrieve the full rank, some of the assumptions underlying these methods no longer hold. There is also extensive research on the related problem of learning to rank \cite{article:LargeScale_LearningRank,article:LargeScale_LearningRank_2016, article:ranknet2lambdarank, article:Doc_retrieval}, where the goal is to predict the ranking of unseen items based on a set of features. Instead, we do not account for item features and focus exclusively on ranking observed items.

\paragraph{Non-parametric models.} Looking in more detail at the non-parametric line of work \cite{article:SVM_RankAggregation_2014, article:NoisySorting_2007, article:EffRank_2013,article:Minimax}, where the data model is similar to ours, the approaches diverge in the underlying assumptions. One can consider that all possible comparisons are available (or not) and that we have access to repeated observations (or not). For modern applications, it seems more reasonable to assume that we have access to repeated observations, but do not necessarily contemplate all possibilities (given the large number of items being compared). In \cite{article:NoisySorting_2007} the authors assume all possible pairs of comparisons are known, without resampling, while in \cite{article:EffRank_2013} they do not assume access to all comparisons, but still do not consider resampling. In \cite{article:Minimax}, the data model is the same as ours, as they allow for resampling and consider only partial access to the observations, thus making it appealing for large scale item datasets. However, their formulation with a pairwise matrix aggregates all comparisons for the same pair, while ours allows for a specification of a different level of confidence on each pair. While this contributes to ranking accuracy it makes our approach dependent on the number of available comparisons, while \cite{article:Minimax} only depends on the number of items. Finally, we look into SVM-RankAggregation \cite{article:SVM_RankAggregation_2014}, as the closest approach both in terms of data model and problem formulation. The authors propose an SVM algorithm, which is proved to converge to the optimal ranking under their most general condition on the pairwise comparison matrix, so-called generalized low-noise (GNL). Under this assumption, the induced dataset is linearly separable and a hard-margin SVM is used, otherwise one uses a soft-margin one with a suitable regularization parameter. However, the experiments are performed for a small number of items and comparisons. Besides, we test our method in a more general setting than the GNL. Another non-parametric approach is the Borda Count analyzed in \cite{article:SimpleRobust_Shah_2018}, where items are simply ranked by the sum of their pairwise comparison probabilities. This simple approach leads to a much lower computational load, but struggles to handle noisy and incomplete data.

\section{The PD-Rank Model}
\label{sec:ProbStatement}

Given $M$ elements and $N$ pairwise comparisons between them, the goal is to retrieve their true unknown ranking encoded by $x \in \mathbb{R}^M$. Note that elements are given in an arbitrary order, which must not be confused with their ranking, i.e., $x[i]$ is the score of element $i$, so that the ranking is achieved by sorting $x$. Comparison $n$ between item $i$ and item $j$, for $i>j$ is encoded in a sparse vector $c_n \in \mathbb{R}^M$, with
\begin{equation*}
	c_n^{(i)} = 1, \qquad c_n^{(j)} = -1,
\end{equation*}
and the remaining elements set to zero. Each vector $c_n$ corresponds to one comparison, with a total of $N$ vectors. Multiple observations between the same elements are allowed. The observed label we have access to is only an ordering of the two items, that is,
\begin{equation*}
	y_n^{*} = \mathrm{sign}(c_n^Tx).
\end{equation*}
where $\mathrm{sign}(x)$ is the sign function and $y_n^*$ is the true label.

Meaning that, if item $i$ has larger ranking than $j$, then $y_n=1$; if there is a tie $y_n = 0$; and if $j$ ranks lower, then $y_n=-1$. However, we do not have access to this label but rather to noisy ones, according to the assumptions expressed in the previous section. Therefore, we model the noise $z_n \in \{-1,1\}$ as an independently and identically distributed Bernoulli random variable, where the probability of a toggle error is $\delta_n$, such that
\begin{equation}
	\mathbb{P}(z_n = -1) = \delta_n, \qquad \mathbb{P}(z_n=1) = 1 - \delta_n.
	\label{eq:toggle_error}\end{equation}
Therefore, the noisy observed labels are given as  $\label{eq:noise_model}
y_n = \mathrm{sign}(c_n^Tx)z_n$.
\subsection{Maximum likelihood estimation}
Given the previous model, we formulate  the problem as finding the maximum likelihood estimator for the ranking $x$, given a set of $N$ observations $y_n$. Under the assumption that the comparisons are i.i.d.\ and stacking them in a vector $y = (y_1, \cdots, y_N)$, the likelihood is given as
\begin{equation}
	p(y|x) = \prod_{n=1}^N p(y_n|x),\label{eq:ML}
\end{equation}
where the likelihood for each observation is
\begin{equation*}
	\begin{split}
		p(y_n|x) =& p(y_n |x,z=1)\mathbb{P}(z_n = 1) +\\  &p(y_n |x,z=-1)\mathbb{P}(z_n = -1).
	\end{split}
\end{equation*}
The conditional probabilities translate to the probability of observing a given label if an error occurred or not, given the ranking knowledge. This corresponds to either $0$ or $1$, depending on whether $y_n$ corresponds to the true comparison $c_n^Tx$ or not. We express this with an indicator function 
\begin{equation*}
	\mathbb{I}_S(u) =
	\begin{cases}
		1 & \mathrm{ if }\quad  u \in S\\
		0 &  \quad \mathrm{otherwise}.
	\end{cases}
\end{equation*}Therefore, and considering the error probabilities as given in \eqref{eq:toggle_error}, we write the likelihood as
\begin{equation}
	\begin{split}
		p(y_n|x) = & \mathbb{I}_{\mathrm{sign} (c_n^Tx)}(y_n)(1-\delta_n) + \\ &\mathbb{I}_{-\mathrm{sign} (c_n^Tx)}(y_n)\delta_n
	\end{split}
	\label{eq:likelihood_obs}\end{equation}
and the maximum likelihood estimator of \eqref{eq:ML} is given as 
\begin{equation}
	\hat{x}_{ML}\in \mathrm{argmax}_x  \sum_{n=1}^N
	\log  \left   \{ p(y_n|x)  \right \}. \label{eq:Prob_MLE}
\end{equation} Our goal is to find an approximate solution for this optimization problem. 

\section{Solving the PD-Rank Problem}
\label{sec:OptimizationProb}

In this section we describe our proposed algorithm: PD-Rank. We first note that our problem can be reformulated into the well-known 0-1 loss. Problem \eqref{eq:Prob_MLE} can be reformulated as
\begin{equation*}
\minimize_x \sum_{n=1}^N\log \Big( \frac{1-\delta_n}{\delta_n}  \Big) [1-\mathbb{I}_{\geq 0}(y_nc_n^Tx)].\end{equation*}
We will introduce a variable $w_n = \log \Big( \frac{1-\delta_n}{\delta_n}  \Big)$,  now representing the confidence on each observation, instead of the noise associated to it. We will also introduce $a_n = y_nc_n$, corresponding to the noisy data received from annotators. Therefore, we will obtain 
\begin{equation}
\minimize_x \sum_{n=1}^Nw_n [1-\mathbb{I}_{\geq 0}(a_n^Tx)],
\label{eq:likelihood_01loss}\end{equation}
which is the formulation of the well-known 0-1 loss. 

\subsection{Approximation}\label{sec:Approximation}

Problem~\eqref{eq:likelihood_01loss} is nonconvex and discontinuous, and thus difficult to optimize. One way to approach this problem is to use an easy-to-optimize approximation, desirably as tight to the original~\eqref{eq:likelihood_01loss} as possible. A common convex surrogate of the 0-1 loss in the Hinge loss \cite{article:hingeloss}, given as 
\begin{equation*}
\minimize_x \sum_{n=1}^Nw_n \max\{0,1-a_n^Tx\}.
\end{equation*} However, given the nature of our problem, where $a_n^Tx$ is always the difference between two terms of $x$, this cost will favor smaller values of $x_n$ instead of the correct rank score. That is, in the presence of opposite labels for the same pair, one of them will necessarily lead to $a_n^Tx<0$, regardless of the values in $x$. On the other hand, the minimum of each term is $0$, so we strongly penalize large differences in wrong ranks but equally benefit from any difference in correct rank. 

Given that this surrogate is not appropriate for our setting, we propose a different approximation using the logarithm of the Log-Sum-Exp (LSE) function ($LSE(t) = \log (1+e^t)$), moving away from convexity but attaining continuity and differentiability. We notice that using the LSE would lead to a similar model as the one found in BT, but in order to stay closer to our noise assumptions we take the logarithm of LSE, which is a tighter approximation to the 0-1 loss (we refer the reader to Figure~\ref{fig:loss} for a more detailed comparison between the original cost and the two approximations). Consequently, we can overcome the problems with the hinge loss in a related manner to the BT while staying closer to our noise model (and as we will show in Section~\ref{sec:Experiments} this will have a positive effect on the ranking accuracy). 

At this point, we leave $w_n$ temporarily aside (i.e. we consider constant and equal $w_n$ for all terms) and propose the following approximation for the 
data fidelity term $(1-\mathbb{I}_{\geq 0}(a_n^Tx))$ in 
\eqref{eq:likelihood_01loss} 
\begin{equation}
\begin{split}
	& \minimize_x \sum_{n=1}^N \log \Big[ \log\big(1+e^{1-a_n^Tx}\big) + \epsilon\Big] +  \gamma\|x\|^2_2\\
	&\subjto  \quad \textbf{1}^Tx = 0,\label{eq:NewCost}
\end{split}\end{equation}
where $\epsilon>0$ is small and the constraint is added to anchor the solution (without it, adding any constant to $x$ still leads to a solution of the problem). The regularization term $\gamma\|x\|^2_2$ is also added to prevent the unbounded increase of distance between the elements in $x$, with the regularization weight $\gamma$ set to a small value.

To solve the approximation we will use an iterative re-weighted approach \cite{article:IterReweighted,article:Iteratively_1,article:Iteratively_2,article:Iteratively_3,article:Iteratively_4}, where the weights are taken as a measure of confidence in our observation. We take this approach due to the quasi convexity of the $\log \log $ terms in \eqref{eq:NewCost}, that will be linearly approximated in the sequence, similarly to the approach in the sparse reconstruction with $L_p$ ($0<p<1$) quasi norms literature. We linearize the $\log$ term in \eqref{eq:NewCost}, while leaving the regularization term untouched, as it does not present the same complexity. So, taking the Taylor expansion of the $n$-th $\log \log $ cost term we get
\begin{equation*}
\log \Big[ LSE(1-a_n^Tx) + \epsilon\Big] \approx   \frac{ \log\big(1+e^{1-a_n^Tx}\big)}{ \log\big(1+e^{1-a_n^Tx^k}\big)+\epsilon}.
\label{eq:TaylorApp}\end{equation*}
Therefore, according to \cite{article:IterReweighted}, at each update $k$, $x^k$ belongs to 
\begin{equation}
\begin{split}
	\argmin_{x \in \reals^m} & \sum\limits_{n=1}^N \omega_n^{k} \log\left (1+e^{1-a_n^Tx} \right) + \gamma \|x\|^2_2 \\ 
	\text{subject to } &\ones^Tx = 0,\label{eq:cvxSubproblem}
\end{split}\end{equation} where 
\begin{equation}
\label{eq:omega}
\omega_n^{k} = \frac{1}{LSE(1-a_n^Tx^{k-1}) + \epsilon}.
\end{equation}
As expressed in Proposition~\ref{prop:min}, Problem~\eqref{eq:cvxSubproblem} has always a unique minimizer. This entails that whenever the algorithm is run with a given dataset, the ranking variable $x$ returned will always be the same for the same data.

\begin{proposition} \label{prop:uniqueness}
Let problem~\eqref{eq:cvxSubproblem}, be one instance for a generic $k > 0$, and define $\omega^0 = (\omega_1^0, \cdots, \omega_N^0)$ as the vector collecting all the initialization weights for the iterative reweighting optimization scheme such that $\omega^0 = \ones$. Define the comparison noisy data $a_n = z_n c_n$, a small regularization constant $\gamma > 0$, and $\omega_n^k$ defined as in \eqref{eq:omega}.
	Then, the $\argmin$ set is a nonempty singleton set, i.e., problem~\eqref{eq:cvxSubproblem} always has a solution and the solution is unique.\label{prop:min}
\end{proposition}
\begin{proof}
	The result follows easily from convexity of the summation terms, where 
	\begin{equation*}\omega_n^k \log\left (1+e^{1-a_n^Tx} \right)
	\end{equation*} is the product of a positive number and the convex log-sum-exp,
	 and the $\gamma$-strong convexity of the second term $\gamma \|x\|^2_2$ \cite{boyd2004convex}.
\end{proof}

We now note that $\omega_n$ and $w_n$ evolve in the same way, taking into account the uncertainty associated with observation $n$. That is, for small $\delta_n$, $w_n =  \log \Big( \frac{1-\delta_n}{\delta_n}  \Big)$ will take higher values, growing to $+\infty$, while for larger values (up to $1/2$) it will decrease to zero. So, $w_n \in \big[0,+\infty\big]$ giving a lower weight to terms with high uncertainty and vice-versa. In the same way, when $a_n^Tx^k$ takes increasingly negative values (corresponding to increased uncertainty with the majority of the data), $\omega_n^k $ will tend to $0$, while in the opposite case it will go to $+\infty$. Therefore, it is reasonable to establish a connection between both these terms, and we will interpret $w_n$ according to $\omega_n$, the final iterate of $\omega_n^k$ (visual support of this explanation can be found in \cite{preprint:rank}). Consequently, we note that the method does require a known knowledge of the noise levels $\delta_n$.

Finally, note that all coherent observations of the same pair (i.e. attribute the same ordering of those two items) will have the same weight. Therefore, they can be replaced by a single term with an additional weight factor corresponding to the number of times it was observed. Essentially this means that repeated observations will not contribute to added computational time. So, the algorithm has a general formulation easily suited for both the setting of annotator behavior modeling and the one without it, allowing for less computational complexity in the latter. The next section will detail the algorithm used to solve each sub-problem~\eqref{eq:cvxSubproblem}.

\subsection{PD-Rank for large-scale data}

\subsubsection{Background: Primal-Dual Hybrid Gradient.}

A useful tool to solve large-scale convex problems is the Primal-Dual Hybrid Gradient (PDHG) algorithm \cite{zhu2008efficient,pock2009algorithm,article:PDHG,chambolle2011first}. It tackles problems of the form 
\begin{equation}
\minimize_{x\in \mathbb{R}^m}  g(Ax) + f(x),
\label{eq:Form_PDHG}
\end{equation}
where $g$ and $f$ are closed proper convex functions and $A\in \mathbb{R}^{n\times m}$. The steps to solve the previous problem are given as 
\begin{equation}
\begin{split}\label{eq:PDHG_algo}
	&p_n = \textrm{prox}_{\tau f}(x_n-\tau (A^T v_n))\\
	&q_n = \textrm{prox}_{\sigma g^*}(v_n+\sigma A(2p_n-x_n))\\
	&(x_{n+1},v_{n+1}) = (x_n,v_n)+\lambda_n ((p_n,q_n)-(x_n,v_n)).
\end{split}
\end{equation}
Convergence guarantees are given in \cite{article:PDHG} for step sizes $\tau\sigma  \leq \frac{1}{\|A\|^2_S}$ and $(\lambda_n)_{n \in \mathcal{N}} \in (0,2)$, where $\|A\|_S$ is the spectral norm of $A$.

\subsubsection{Reformulation of our problem}
Problem \eqref{eq:cvxSubproblem} can be expressed as
\begin{equation}\label{eq:myg}
\begin{split}
	\minimize_{x} & \quad g(Ax)+\gamma\|x\|^2_2\\
	\subjto & \quad \textbf{1}^Tx = 0.
\end{split}
\end{equation}
where $	g(y) =  \sum_{n=1}^N \omega_n \log(1+e^{1-e_n^Ty})$
and $A$ is the matrix with row $n$ corresponding to $a_n$ and $e_n \in \mathbb{R}^M$ is a vector of zeros with a $1$ the $n$-th position.
Finally, we can have the equivalent problem 
\begin{equation}
\minimize_{x}  \quad g(Ax) + f(x),
\label{eq:Prob_form_PDHG}
\end{equation}
where $f(x) = i_{\{u:\textbf{1}^Tu=0\}}(x) + \gamma\|x\|^2_2$, with
\begin{equation}
i_s(u) = \begin{cases}
	0 & \mathrm{ if }\quad  u \in S\\
	+\infty  & \quad \mathrm{otherwise}.
\end{cases}\label{eq:myf}
\end{equation}

\subsubsection{PD-Rank algorithm}
Problem \eqref{eq:Prob_form_PDHG} is in the form of \eqref{eq:Form_PDHG}, so we can directly apply the steps in \eqref{eq:PDHG_algo}. The proximal operators are given in Proposition~\ref{prop:Prox} (additional details may be found in Appendix~\ref{sec:Appendix_Proof}).
\begin{proposition}
The proximal operators $\textrm{prox}_{\tau f}(x)$ and $\textrm{prox}_{\sigma g^*}(x)$ of functions $f$ as defined in \eqref{eq:myf} and  $g$ as defined in \eqref{eq:myg} are respectively given as  
\begin{equation*}
	\begin{split}
		\textrm{prox}_{\tau f}(x) &= x' - \bar{\textbf{x}'}\\
		\textrm{prox}_{\sigma g^*}(x) &= x -  \tau   ( \textrm{prox}_{\tilde{w}_n \tilde{g}_n}(\tilde{x}_n) )_{n=1}^N,
	\end{split}
\end{equation*}where $x' = \frac{x}{1+2\gamma}$, $\bar{\textbf{x}}$ is the mean value of $x$, $\tilde{x}_n = x_n/\sigma$, $\tilde{w}_n = w_n/\sigma$ and  $( f_n(x_n) )_{n=1}^N$ is defined as the concatenation of function $f_n$ evaluated at component $n$ of $x$, for its N elements. Furthermore, $\textrm{prox}_{\tilde{w}_n \tilde{g}_n}(\tilde{x}_n)$ is the solution of the following equation
\begin{equation}
	\frac{-e^{1-u}}{1+e^{1-u}}+\frac{u-\tilde{x}_n}{\tilde{w}_n}=0.\label{eq:prox}
\end{equation}\label{prop:Prox}\end{proposition}

\section{Experiments}
\label{sec:Experiments}

PD-Rank\footnote{Code available at \url{https://github.com/FilVa/PD-Rank}} is targeted for large-scale scenarios where the number of observations available is relatively small and the number of items $m$ is large, under a generic non-parametric noise model. It is common to express the number of observations $n$ in terms of \textbf{standard trials}, where one standard trial corresponds to all possible pairings of the $m$ items, given as $m(m-1)/2$  (note that due to repetitions 1 standard trial does not necessarily mean that we observe all possible pairs). In a large data setting we can expect to have access to $n \ll 1$ standard trial.

Since we do not aim at retrieving the correct rating score, but only the ranking, we choose Kendall's tau coefficient as a metric. Given the ground truth scores ($x_{\textrm{GT}}$) and the resulting order given by each method ($x_{\textrm{pred}}$), Kendall's tau is given as
\begin{equation*}
	\tau(x_{\textrm{GT}},x_{\textrm{pred}}) = \frac{P-Q}{P+Q}
\end{equation*}
where $P$ is the number of concordant pairs and $Q$ the number of discordant pairs. A value closer to $1$ implies a better predicted ranking.

We consider simulated data under our noise model and the BT, for different values of $n$ and $\delta$, to test the range of settings where our method is better or equivalent to state-of-the-art approaches. We then increase the number of items to test how well PD-Rank escalates in terms of computation time and accuracy. Finally, we test with real-data, comparing against active learning methods. Additional experiments and metrics may be found in Appendix.

\subsection{Simulated data}
\label{subsec:EXP_Algo}

\paragraph*{Setting.}
The standard setting considers $m=30$ items, from which $n$ pairwise comparisons are randomly picked with uniform probability, with a toggle noise of $\delta=0.1$. We compare PD-Rank against the most relevant methods already pinpointed in the literature review: RC \cite{article:RankCentrality}, I-LSR \cite{article:ILSR}, ASR \cite{article:asr}, HodgeRank \cite{article:HodgeRank}, Borda \cite{article:SimpleRobust_Shah_2018}, SVM-RankAggregation \cite{article:SVM_RankAggregation_2014} and MS \cite{article:Minimax}. In general, we take the largest connected component of the underlying graph, but given that some methods (I-LSR and RC) require strong connectivity, we also conduct simulations under this scenario.

\paragraph*{Weights as a measure of confidence.}
An important feature of PD-Rank is the confidence provided through the weights. To attest this, we look at the evolution of weights over the outer iterations (Figure~\ref{fig:weights}). When there are enough comparisons to produce a completely accurate ranking, all incorrect labels have a weight lower than one and vice-versa, thus perfectly indicating how accurate such measurements are. In the presence of fewer observations, some of the pairs have incorrect weights. Nonetheless, we see that their value is in general closer to 1 than the extremities, thus still providing a measure of uncertainty. 

\begin{figure}[t]
	\begin{subfigure}{0.45\columnwidth}
		\includegraphics[width=\columnwidth]{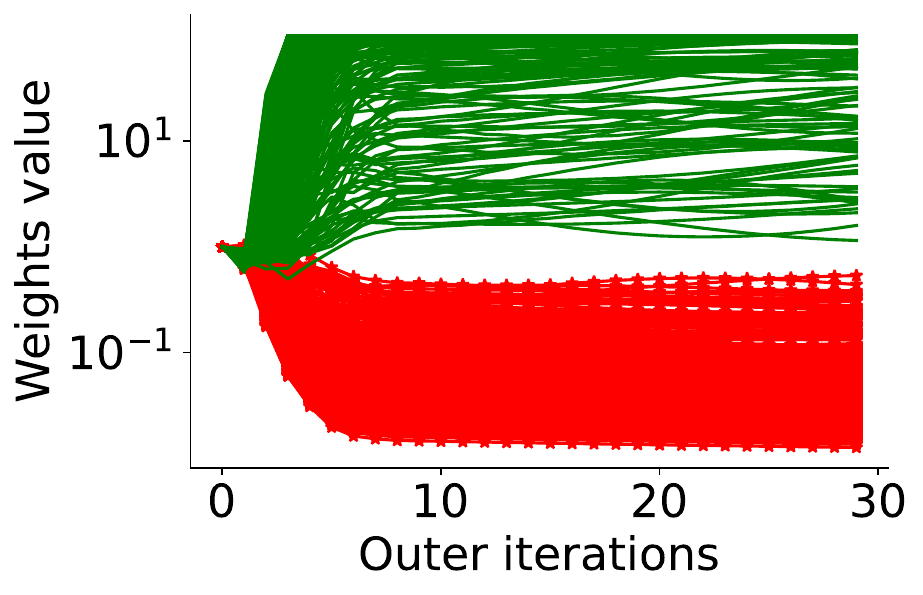}
		\caption{30 standard trials}		
	\end{subfigure}
	\begin{subfigure}{0.45\columnwidth}
		\includegraphics[width=\columnwidth]{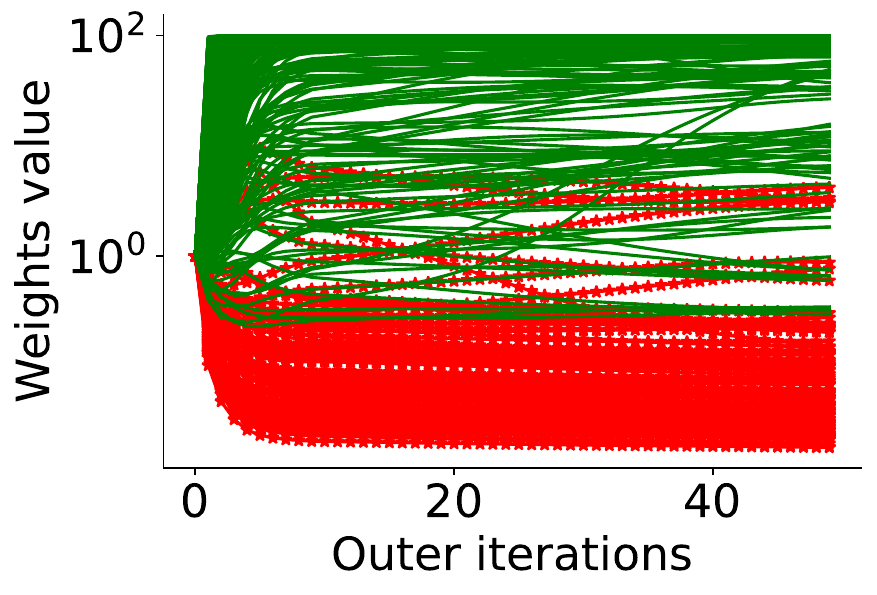}
		\caption{3 standard trials}		
	\end{subfigure}
	\caption{For each panel, we depict the evolution of weights for each unique comparison, over the outer iterations. In green the correct labels and in red the noisy ones. It is noticeable that incorrect observations attain values lower than one, while correct ones tend to $1/\epsilon$. As expected, when more observations are available the weight value is more accurate.}\label{fig:weights}
\end{figure}

\paragraph*{Outperforms competitors for independent noise model with limited observations.}
Under our noise model PD-Rank should outperform the competitors, especially for lower values of $n$ as this is the expected scenario of large data. We test this by progressively increasing $n$ from $0.3$ to $15$ standard trials, with the remaining parameters constant. Indeed, PD-Rank stands out in small to moderate comparison cardinality $n$, up to $10$ standard trials, while for higher values most models show similar performance except for ASR, RC and MS (Figure~\ref{fig:ourmodel_n}). The only method with similar performance is SVM-Rank Aggregation for larger values of $n$. It is also relevant to test the range of noise where this behaviour holds, so we increase $\delta$ from $0.1$ to $0.3$, taking $n$ as $0.8$ of a standard trial. PD-Rank presents a gap of almost $0.1$ Kendall tau up to $\delta=0.3$, when ASR and HodgeRank start to approach its performance, but still remain inferior (Figure~\ref{fig:ourmodel_noise}). Therefore, PD-Rank is the best choice for scenarios with few observations available, even under high values of noise, if the data follows the considered noise model.

\begin{figure}[t]
	\begin{subfigure}{\columnwidth}
		\includegraphics[width=\columnwidth]{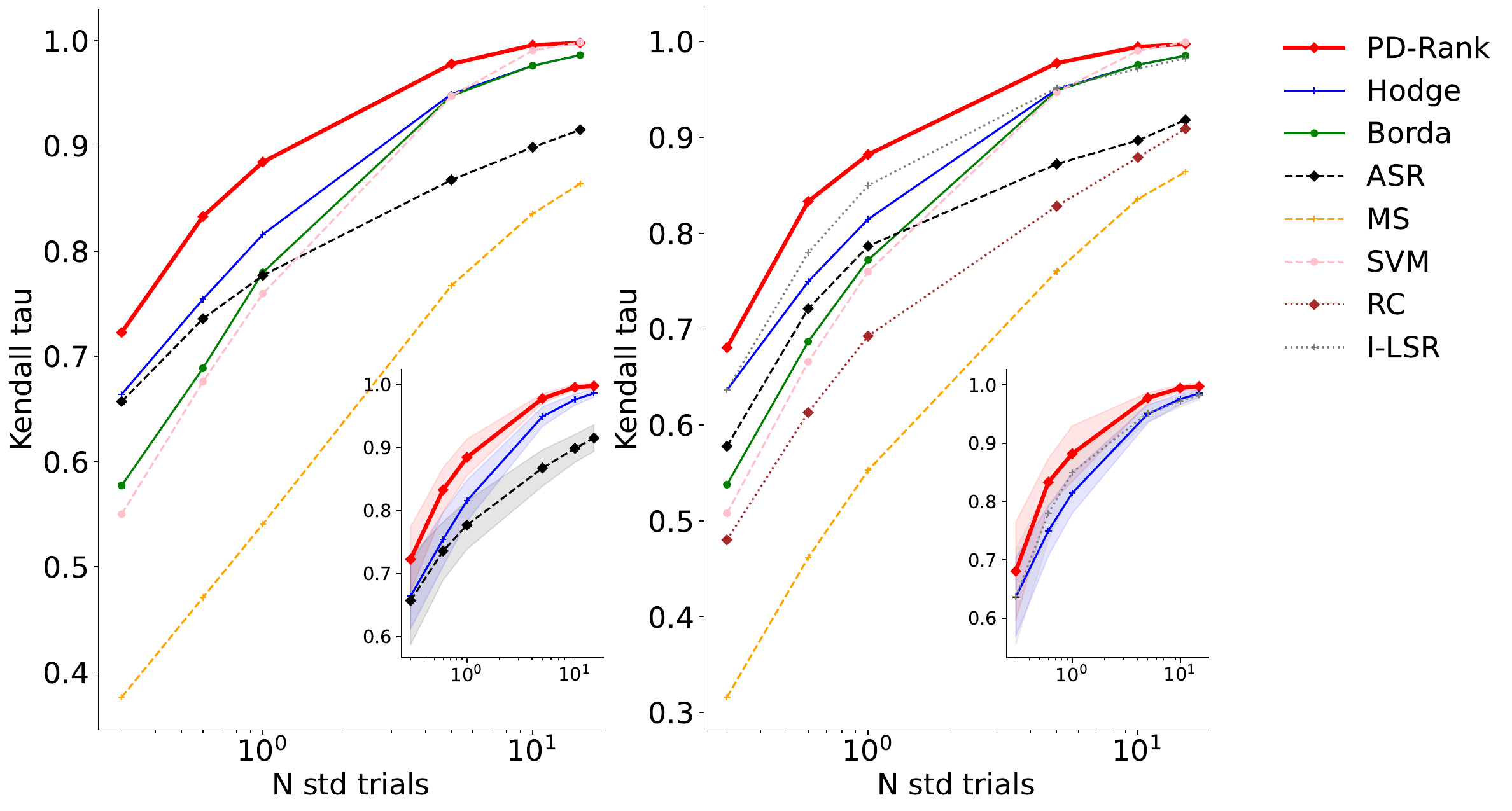}
		\caption{Variation with number of comparisons for the largest connected (left) and strongly connected graphs (right)}\label{fig:ourmodel_n}
	\end{subfigure}
	\begin{subfigure}{\columnwidth}
		\includegraphics[width=\columnwidth]{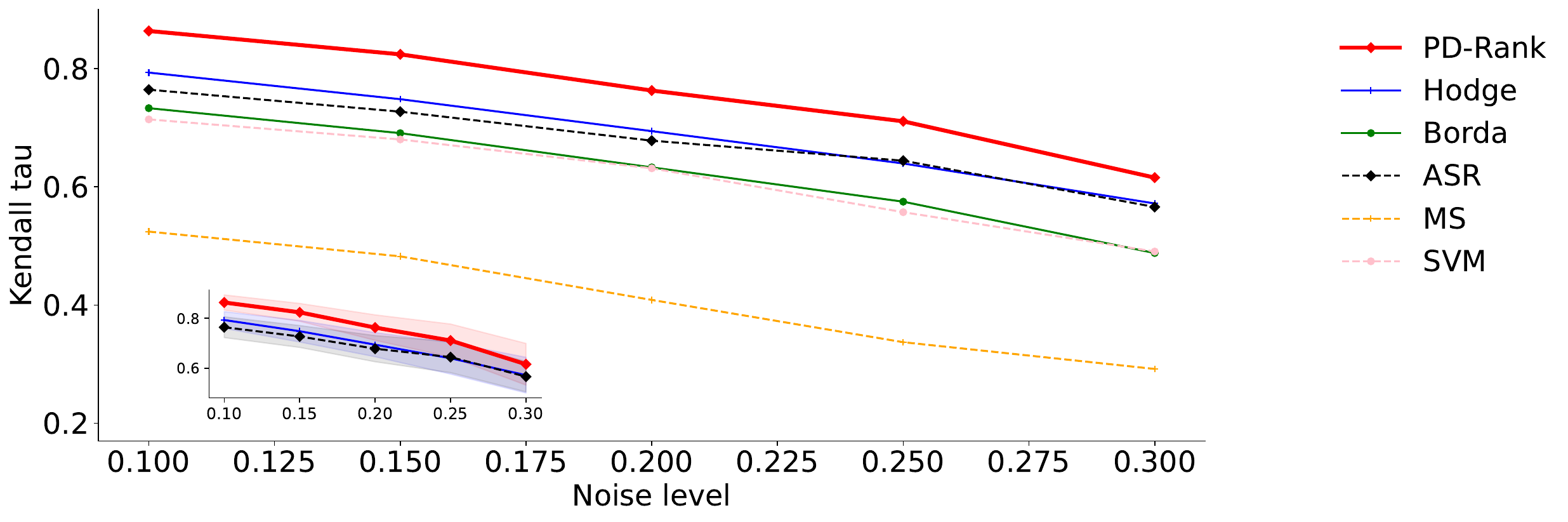}
		\caption{Variation with noise level}\label{fig:ourmodel_noise}
	\end{subfigure}	
	\caption{Kendall tau for simulated data with noise independent of score value. All the methods are depicted in the main plot, while the smaller subplots depict the same values with the 1 standard deviation in the shaded area, for the most competitive benchmarks.}
	\label{fig:ourModel}
\end{figure}
\paragraph*{Competitive for a different noise model.} 
Given that the data in the latter experiment was produced according to the assumptions in our model, a better performance from PD-Rank was expected, so we conduct further simulations under the BT noise model, often found in the ranking literature. The remaining setting is the same and we consider two different ranges of scores for $x$, which roughly translates to different levels of noise in our model (Figure~\ref{fig:BTModel}). For a larger $n$ most parametric methods  naturally outperform the non-parametric ones, as the data follows their assumptions. If the underlying graph is strongly connected, then I-LSR is always the better option for this noise model, but the method is not available for connected graphs (taking the largest strongly connected component is an option but it will likely lead to loss of information). Here we present only the results for strongly connected graphs, but the results for connected ones are similar and may be found in Appendix~\ref{app:BT_connected}. Interestingly, for lower values of $n$ PD-Rank is competitive even with the parametric methods, especially for larger scores, i.e. for lower noise. Since large data scenarios will necessarily contemplate $n \ll 1 \textrm{ standard trial}$ and will unlikely present strong connectivity over a large set of elements, PD-Rank is still a competitive option, even under a different noise model.

\begin{figure}[t]
	\begin{subfigure}{\columnwidth}
		\includegraphics[width=\columnwidth]{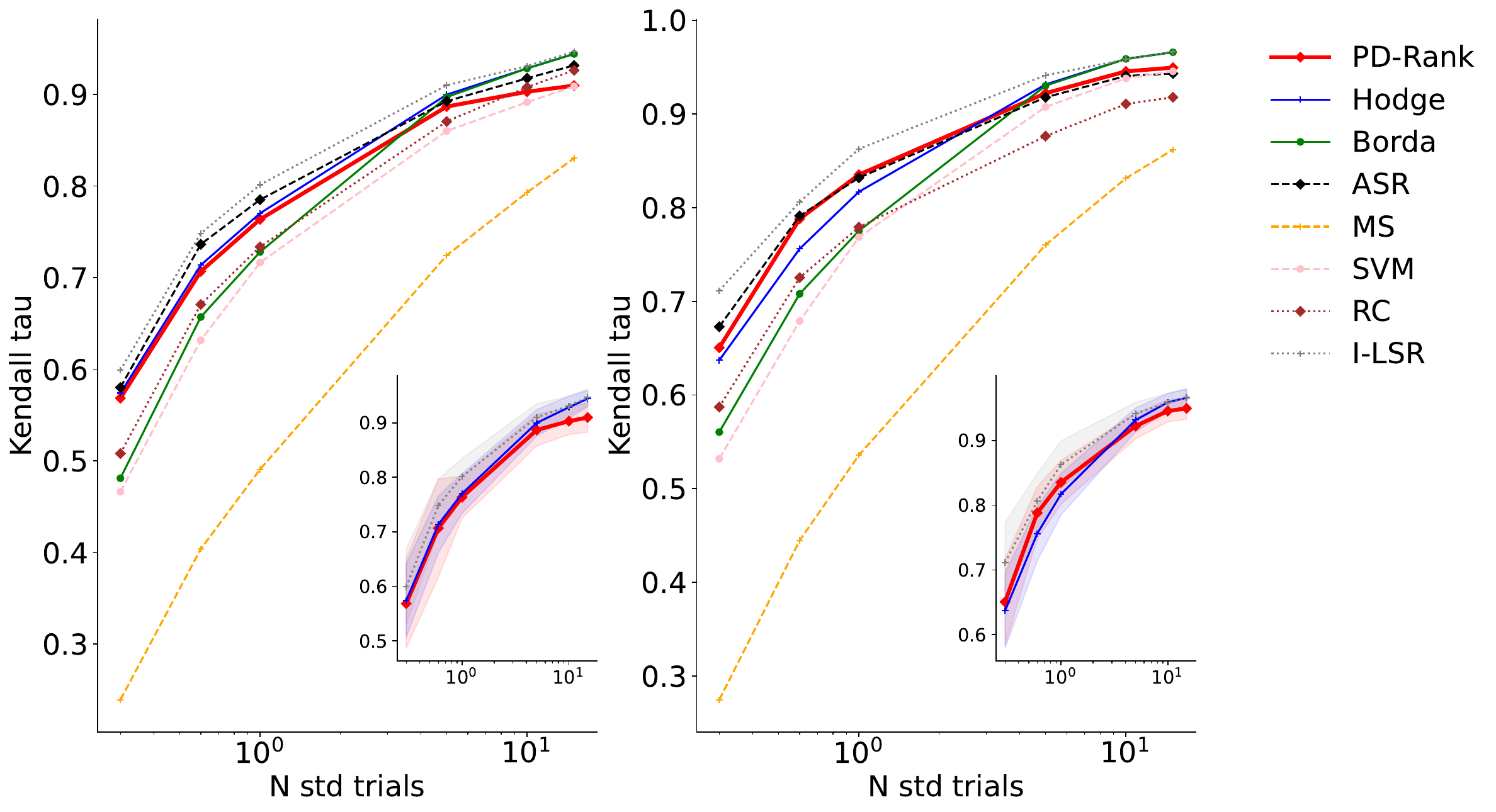}
		\caption{Kendall tau}
	\end{subfigure}	
	\begin{subfigure}{\columnwidth}
		\includegraphics[width=\columnwidth]{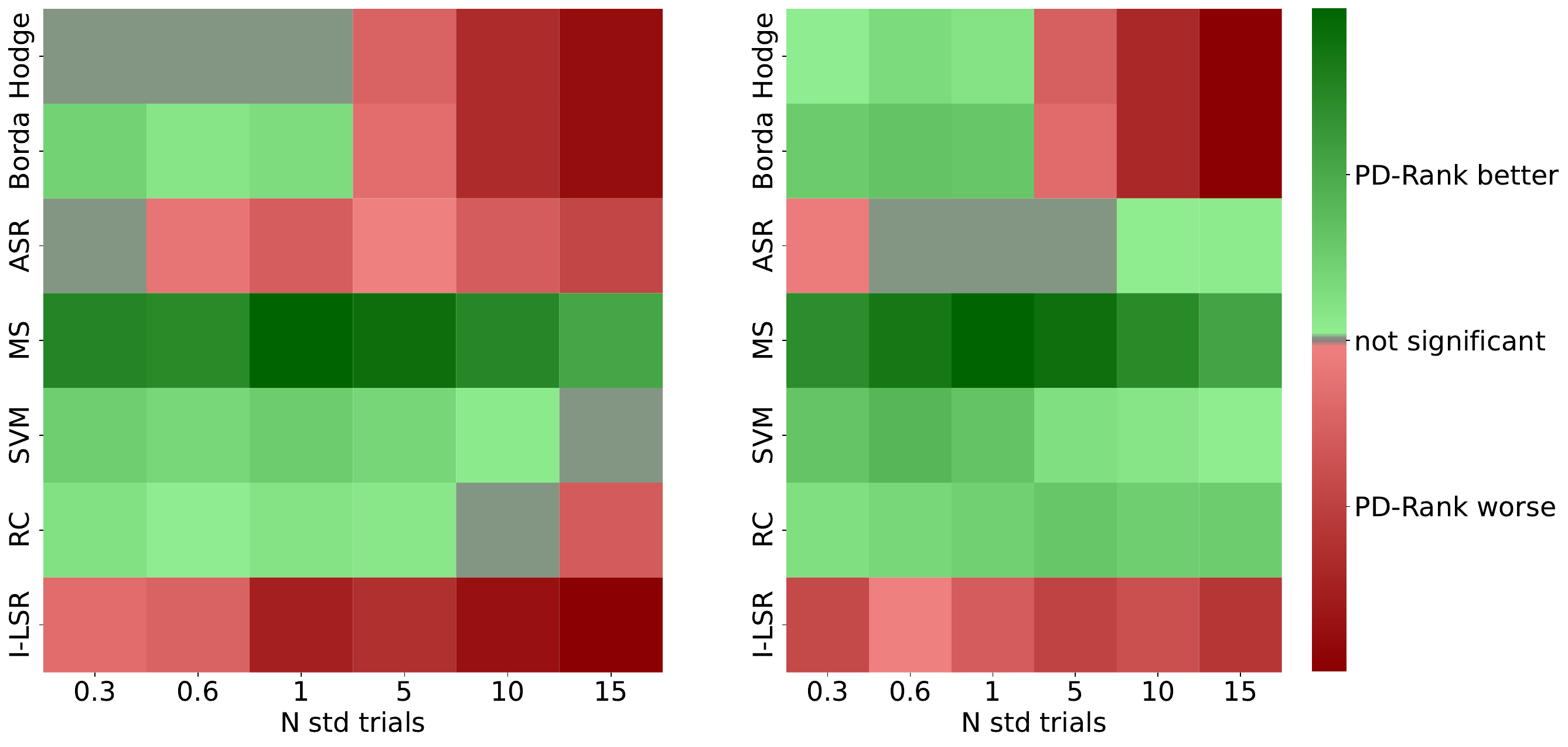}
		\caption{Results of paired t-test between Kendall scores of PD-Rank and other methods}
	\end{subfigure}	
	\caption{Simulated data according to the BT model for strongly connected graphs, on the left scores in range $[0,5]$ and on the right $[0,10]$. For the heatmap, the color is proportional to the statistic value, being green if PD-Rank performs better and red, otherwise; if the p-value is smaller than $0.05$ the color is set to grey. }
	\label{fig:BTModel}
\end{figure}
\paragraph*{Scales with the number of items.}
For the large data scenario, we want to see how PD-Rank handles an increase in number of items $m$ for low values of comparisons $n$. We consider connected graphs and $5$ comparisons per item, taking $m$ between $50$ and $1000$. PD-Rank always stands above the competitors, with a more clear gap as $m$ increases (Figure~\ref{fig:largedata}). Although PD-Rank has the highest computational time it scales well with the number of items, which is the fundamental point. PD-Rank has a non-optimized implementation and the absolute values of computation time have a large potential for improvement, especially when it comes to the solution of equation \eqref{eq:prox}.

\begin{figure}[t]
\centering
	\includegraphics[width=	\columnwidth]{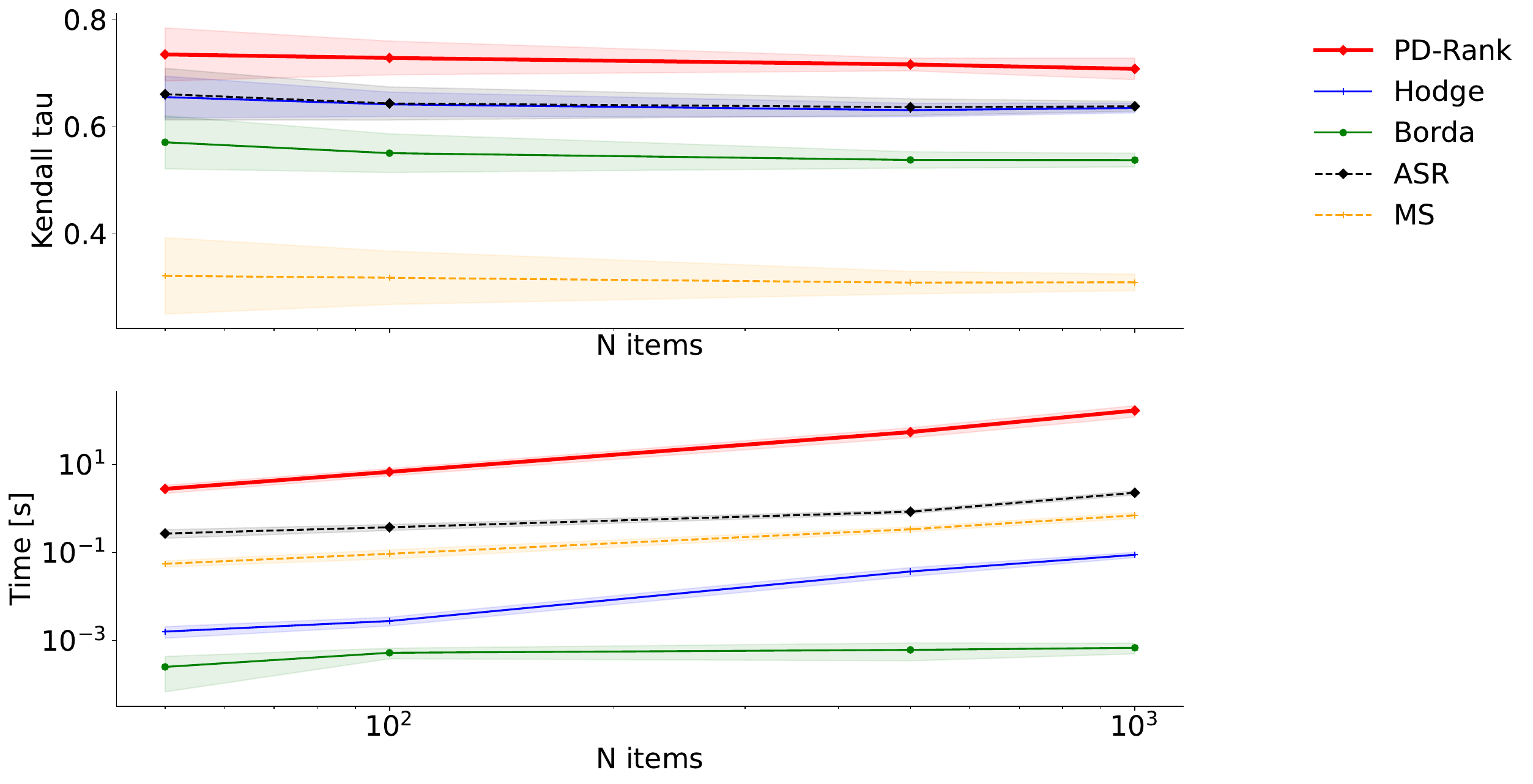}
	\caption{Large data setting, with variation of number of items $m$, with $5$ comparisons per item. On top we see the performance in terms of the Kendall coefficient and on the bottom the respective computation time. }
	\label{fig:largedata}
\end{figure}

\subsection{Real-world datasets}
\label{subsec:EXP_RealWorld}
\paragraph*{Dataset and setting.}
We use two real-world common examples taken for the rank aggregation problem, for the pairwise comparison of distorted videos and images. The \textbf{Image Quality Assessment (IQA) dataset} is taken from the LIVE database \cite{database:LIVE_2008} and \cite{database:IVC_2005} by the authors in \cite{dataset:IQA} and the \textbf{Video Quality Assessment (VQA) dataset} from~\cite{database:VQA} by the authors in~\cite{dataset:VQA}. Both cases have original reference images or videos, respectively, with additional distorted versions, from which the authors produce a dataset of pairwise comparisons. Each reference includes 16 items to be ordered.

We compare PD-Rank with Borda Count and state-of-the-art active learning methods: Hodge-active \cite{article:Hodge-active_2017} and  Hybrid-MST \cite{article:Hybrid_MST_2018}. Since there is no available ground truth (GT) for these datasets it is common practice to take GT as the ranking obtained with the full dataset (e.g. 32 standard trials for one IQA reference set). Note that we will take the BT solution as GT, as it is the golden standard, but this brings some unfairness to Borda Count and PD-Rank, as they are usually not able to reach Kendall of 1.0, since the GT according to all methods is not the same. 

\paragraph*{Compared with active learning methods, achieves high Kendalls in less computational time.} 
Active learning methods take advantage of pair selection to achieve better accuracy with smaller values of $n$, but this necessarily leads to an increased time complexity. We wish to test whether PD-Rank is able to reach similar of higher values of Kendall in less time, even if it needs to resort to a larger number of observations. While Hodge-active and Hybrid-MST always arrive at a higher Kendall with less available comparisons, they must be run sequentially. So, in order to reach Kendall values larger than approximately $0.85$, PD-Rank becomes the best option in computation time (Figure~\ref{fig:VQA-ref1}).  Therefore, if the number of comparisons is limited and the setting allows for selection of pairs, Hodge-active and Hybrid-MST are the best choice; if, for instance, the comparisons were already retrieved and one wishes to obtain a ranking in a shorter time window, then PD-Rank is more suitable.
\begin{figure}[t]\centering
		\includegraphics[width=0.85\columnwidth]{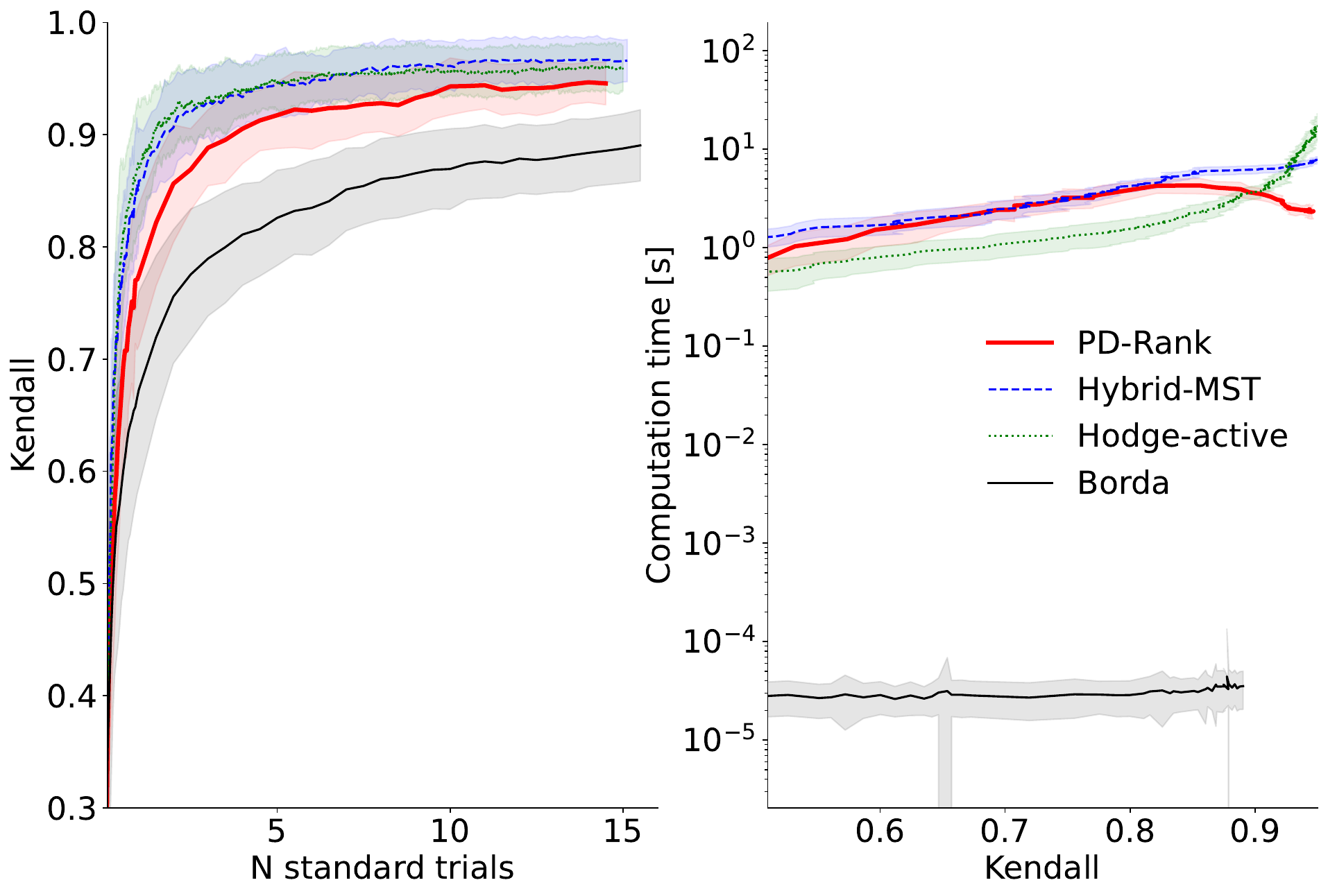}
	\caption{Results for one example of IQA-dataset (other examples can be found in Appendix~\ref{appendix:exp_real}). On the left plot, we can see the evolution of ranking accuracy with the observed samples. On the right we show the evolution of computation time with the achieved Kendall coefficient: lines above PD-Rank require more time for the same accuracy and vice-versa.}
	\label{fig:VQA-ref1}
\end{figure}
\section{Conclusion}
\label{sec:Conclusion}
We defined a new model for the ranking problem, leading to a non-convex and discontinuous optimization problem. We obtained a continuous nonconvex approximation and solved it with PD-Rank, an algorithm based on iteratively re-weighted minimization and the Primal-Dual Hybrid Gradient. The method competes in accuracy with the state-of-the-art under simulated and real-world data experiments, and its scalability to large data scenarios. Nonetheless, a limitation that remains to be addressed in future work is the evaluation of our approach in even larger-scale scenarios ($> 1K$ items) and the inclusion of theoretical guarantees regarding our approximation. Moreover, while our method is well suited for modeling annotator behavior by accounting for varying noise across observations, this was not explored in the present study and requires further research. Future directions also include the integration with active learning, especially taking advantage of the information provided by confidence weights.

\paragraph{Acknowledgments.} We sincerely thank the reviewers for their feedback and suggestions, which have significantly contributed to improve the quality of our work. This work is funded by national funds through FCT – Fundação para a
Ciência e a Tecnologia, I.P., in the scope of project HyCARE (2024.07361.IACDC), UID/04516/NOVA Laboratory for Computer Science and Informatics (NOVA LINCS), UIDB/00297/2020 \url{https://doi.org/10.54499/UIDB/00297/2020}) and UIDP/00297/2020\\  (\url{https://doi.org/10.54499/UIDP/00297/2020}) (Center for Mathematics and Applications).

\bibliographystyle{siam}

\bibliography{pdrank_bib}

\appendix

\newcommand{\expnumber}[2]{{#1} \times 10^{#2}}

\section{Proof of Proposition \ref{prop:Prox}}\label{sec:Appendix_Proof}
We derive the proximal operators of $f(x)$ and $g^*(x)$ in subsections~\ref{sec:prox_f} and~\ref{sec:prox_g}, respectively. For ease of the reader, in subsection~\ref{sec:prox_theory}, we restate some basic background on proximal operators.
\subsection{Proximal operators: background}
\label{sec:prox_theory}
\begin{Definition}[Proximal operator]
	Given a function $f$, the proximal operator $\prox_f: \mathbb{R}^n\to \mathbb{R}^n $ of $f$ is defined as
	\begin{equation}
		\prox_f(x) = \argmin_u\Big( f(u)+\frac{1}{2}\|u-x\|^2_2\Big)
	\end{equation} and the proximal operator of the scaled function $\lambda f$, where $\lambda>0$ is expressed as
	\begin{equation}
		\prox_{\lambda f}(x) = \argmin_u\Big( f(u)+\frac{1}{2\lambda}\|u-x\|^2_2\Big).
	\end{equation}
\end{Definition}
\begin{proposition}[Indicator function]
Given $f(x) = i_C(u)$, where $ i_C(u)$ is the indicator function defined as
\begin{equation}
i_C(u) =\begin{cases}
1 & \mathrm{ if }\quad  u \in C\\
+\inf  &\quad \mathrm{otherwise},
\end{cases}
\label{eq:indicator_func}\end{equation}and $C$ is a nonempty, closed and convex set, the proximal operator of $f(x)$ exists and is unique, and it is given by the orthogonal projection operator onto the same set, denoted as $P_C(x)$, so
\begin{equation}
\prox_f(x) = P_C(x) .
\label{eq:prox_indicator}\end{equation}
\end{proposition}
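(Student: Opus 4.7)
The plan is to peel the result straight from the definition of the proximal operator. By definition, $\prox_f(x) = \argmin_u \bigl\{ f(u) + \tfrac{1}{2}\|u-x\|_2^2 \bigr\}$, and since $f = i_C$ is an indicator of $C$ — taking a finite value on $C$ and $+\infty$ off $C$ — the minimization reduces to $\min_{u \in C} \tfrac{1}{2}\|u-x\|_2^2$. Recognizing the right-hand side as the Euclidean projection of $x$ onto $C$ immediately yields $\prox_f(x) = P_C(x)$.

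First I would justify the domain reduction rigorously: any $u \notin C$ assigns the objective value $+\infty$, hence no such point can be a minimizer, provided at least one feasible $u$ exists; nonemptiness of $C$ guarantees that feasibility. Whatever finite constant the indicator takes on $C$ (as written in \eqref{eq:indicator_func}, this is $1$, though the standard convention is $0$) plays no role in the $\argmin$. What remains is the projection problem $\min_{u \in C} \tfrac{1}{2}\|u-x\|_2^2$ by definition of $P_C$.

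Next I would establish existence and uniqueness of the minimizer. Existence follows from the Weierstrass theorem: the objective $\tfrac{1}{2}\|u-x\|_2^2$ is continuous and coercive, and $C$ is closed, so the infimum is attained over $C$. Uniqueness follows from the $1$-strong convexity of $u \mapsto \tfrac{1}{2}\|u-x\|_2^2$ together with convexity of $C$: if two distinct minimizers $u_1, u_2 \in C$ existed, the midpoint $\tfrac{1}{2}(u_1 + u_2)$ would lie in $C$ by convexity and strictly decrease the objective by strong convexity, contradicting optimality. These two properties also certify that $P_C(x)$ is well defined, so the stated equality $\prox_f(x) = P_C(x)$ is unambiguous.

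The main obstacle is essentially nonexistent — this is a textbook consequence of the proximal operator's definition. The only mild care required is to treat the indicator exactly as written in \eqref{eq:indicator_func} (noting that any additive constant drops out of the $\argmin$) and to invoke closedness, nonemptiness, and convexity of $C$ at the appropriate places: closedness and nonemptiness for existence, and convexity together with strong convexity of the quadratic for uniqueness.
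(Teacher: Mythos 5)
Your proof is correct and complete. Note that the paper states this proposition purely as background in Appendix~\ref{sec:prox_theory} and offers no proof of its own, so there is nothing to compare against; your argument is the standard textbook one (reduce the $\argmin$ to $\min_{u\in C}\tfrac12\|u-x\|_2^2$, Weierstrass plus closedness for existence, strong convexity of the quadratic plus convexity of $C$ for uniqueness), and it is exactly what a proof of this statement should look like. Your parenthetical observation is also apt: as written in \eqref{eq:indicator_func} the indicator takes the value $1$ on $C$ (and ``$+\inf$'' should read $+\infty$), which is a typo relative to the usual convention of $0$, but as you note any finite constant on $C$ drops out of the $\argmin$, so the conclusion is unaffected.
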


\begin{proposition}[Regularization]
Given $f(x) = \phi(x)+(\rho/2)\|x-a\|_2^2$, the proximal operator of $f(x)$ is given as 
\begin{equation}
\prox_{\lambda f}(v) =  \prox_{\tilde{\lambda} \phi}( (\tilde{\lambda}/\lambda) v + (\rho\tilde{\lambda})a),
\label{eq:prox_reg}\end{equation} with $\tilde{\lambda}= \lambda/(1+\lambda\rho)$.
\end{proposition}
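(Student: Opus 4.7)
The plan is to start directly from the definition of the proximal operator applied to $\lambda f$ and reduce the resulting minimization to the proximal problem for $\phi$ alone by absorbing the two quadratic terms into a single quadratic, a standard "completing the square" maneuver.

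Concretely, I would write
\begin{equation*}
\prox_{\lambda f}(v) = \argmin_u\Big\{\phi(u) + \tfrac{\rho}{2}\|u-a\|_2^2 + \tfrac{1}{2\lambda}\|u-v\|_2^2\Big\},
\end{equation*}
and then combine the last two terms. Expanding, the quadratic part equals
\begin{equation*}
\tfrac{1}{2}\Big(\rho + \tfrac{1}{\lambda}\Big)\|u\|_2^2 - u^\top\!\Big(\rho a + \tfrac{v}{\lambda}\Big) + \text{const},
\end{equation*}
where the constant does not depend on $u$. The key identification is $\rho + 1/\lambda = (1+\lambda\rho)/\lambda = 1/\tilde{\lambda}$, so the quadratic coefficient is exactly $1/(2\tilde{\lambda})$, matching the definition of $\tilde{\lambda}$ stated in the proposition. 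This is the algebraic step that pins down why $\tilde{\lambda}$ has the form $\lambda/(1+\lambda\rho)$ rather than anything else.

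Completing the square in $u$ with this coefficient then yields
\begin{equation*}
\tfrac{\rho}{2}\|u-a\|_2^2 + \tfrac{1}{2\lambda}\|u-v\|_2^2 = \tfrac{1}{2\tilde{\lambda}}\Big\|u - \tilde{\lambda}\Big(\rho a + \tfrac{v}{\lambda}\Big)\Big\|_2^2 + \text{const}',
\end{equation*}
where again the constant is independent of $u$ and therefore irrelevant to the $\argmin$. Substituting back, the minimization becomes
\begin{equation*}
\prox_{\lambda f}(v) = \argmin_u\Big\{\phi(u) + \tfrac{1}{2\tilde{\lambda}}\big\|u - \big((\tilde{\lambda}/\lambda)v + (\rho\tilde{\lambda})a\big)\big\|_2^2\Big\},
\end{equation*}
which is by definition $\prox_{\tilde{\lambda}\phi}\!\big((\tilde{\lambda}/\lambda)v + (\rho\tilde{\lambda})a\big)$, as claimed.

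There is not really a hard step here: the entire content is bookkeeping for the quadratic completion and the identification of $\tilde{\lambda}$. The only place to be careful is to verify that the piece discarded as "const" truly has no $u$-dependence, so that the $\argmin$ of the reduced problem coincides with that of the original; everything else is direct substitution. One might optionally add a sentence noting that $\prox_{\lambda f}$ is well defined (the objective is strongly convex in $u$ because of the $1/\lambda$ quadratic, regardless of convexity of $\phi$ provided $\phi$ is proper lower semicontinuous), so the resulting $\argmin$ is a singleton and the identity is an equality of points rather than of sets.
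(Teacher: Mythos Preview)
Your derivation is correct and is the standard completing-the-square argument for this identity. Note, however, that the paper does not actually prove this proposition: it is listed in the ``Proximal operators: background'' subsection as a known fact (alongside the definition of the prox and the indicator-function prox), and is merely invoked later when computing $\prox_g$ from $\prox_f$. So there is no paper proof to compare against; your write-up supplies what the paper omits.

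One small quibble on your closing parenthetical: saying the objective is strongly convex ``regardless of convexity of $\phi$'' is not right---a sufficiently nonconvex $\phi$ can overwhelm the quadratic. The clean hypothesis is that $\phi$ is proper, lower semicontinuous, and convex, which is the standing assumption under which proximal operators are defined in this paper; with that, both sides are well-defined singletons and your identity holds as an equality of points.
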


\begin{proposition}[ Log-Sum-Exp function] Given $f(x)=  \log(1+e^{1-x}) $, the proximal operator of $f$, $\prox_{ \lambda f}(x)$, is given as the solution of the following equation with respect to $u$
\begin{equation}
\frac{-e^{1-u}}{1+e^{1-u}}+\frac{u-x}{\lambda}=0.
\label{eq:prox_LSE_final}\end{equation}
\end{proposition}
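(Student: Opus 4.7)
The plan is to derive the claimed equation directly from the defining first-order optimality condition for the scalar proximal minimization. First I would write out
$$\prox_{\lambda f}(x) = \argmin_{u\in\mathbb{R}}\left\{\log\bigl(1+e^{1-u}\bigr) + \frac{1}{2\lambda}(u-x)^2\right\},$$
and observe that the objective is the sum of the scalar log-sum-exp term $u\mapsto\log(1+e^{1-u})$, which is convex as the composition of the convex nondecreasing map $t\mapsto\log(1+e^{t})$ with the affine $u\mapsto 1-u$, and of the $1/\lambda$-strongly convex quadratic $u\mapsto (u-x)^2/(2\lambda)$. Hence the overall objective is strongly convex, smooth, and coercive, so $\prox_{\lambda f}(x)$ exists and is unique.

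Next I would differentiate and set the derivative to zero. Because both summands are everywhere differentiable, the unique minimizer $u^\star$ is characterized by the stationarity condition
$$\frac{d}{du}\log\bigl(1+e^{1-u}\bigr)\Big|_{u=u^\star} + \frac{u^\star-x}{\lambda} = 0.$$
A short chain-rule computation — using $\frac{d}{dt}\log(1+e^{t})=e^{t}/(1+e^{t})$ together with $\frac{d}{du}(1-u)=-1$ — gives $\frac{d}{du}\log(1+e^{1-u}) = -e^{1-u}/(1+e^{1-u})$, producing exactly the equation displayed in the statement.

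Finally I would argue sufficiency and well-posedness of the transcendental equation. Strong convexity of the proximal objective implies that its derivative is strictly increasing in $u$, so the left-hand side of the stated equation is a strictly increasing continuous function of $u$ that tends to $-\infty$ as $u\to-\infty$ and to $+\infty$ as $u\to+\infty$ (the linear-in-$u$ quadratic-derivative term dominates the bounded sigmoidal term). Therefore the equation admits a unique real root, and by the equivalence of stationarity with global minimization for strongly convex smooth objectives, this root equals $\prox_{\lambda f}(x)$, as claimed. I do not expect any genuine obstacle in this argument — the whole proof is a routine smooth-convex optimality computation — but I would be careful to flag that no closed form for $u^\star$ exists, so downstream use of $\prox_{\lambda f}$ inside the PDHG iteration will require a scalar rootfinder applied to the displayed equation.
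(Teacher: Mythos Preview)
Your proof is correct and follows essentially the same approach as the paper: write the defining proximal minimization and set the derivative of the objective to zero. You add a justification of existence and uniqueness via strong convexity and the monotonicity of the stationarity equation, which the paper omits, but the core argument is identical.
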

Note that the proximal operator is given by the minimization
\begin{equation}
\prox_{ \lambda f}(x) = \argmin_u\Big(\log(1+e^{1-u})+\frac{1}{2\lambda}\|u-x\|^2_2\Big).
\label{eq:prox_LSE_og}\end{equation}
By taking the derivative of~\eqref{eq:prox_LSE_og} with respect to $u$ and equating to $0$, we obtain \eqref{eq:prox_LSE_final}.
\subsection{Proximal operator of $f$}
\label{sec:prox_f}
\begin{proposition}
Given $f(x)$ defined as $f(x) = i_{\{u: \textbf{1}^T u = 0\}}(x)$, where $i_S(u)$ is an indicator function of set $S$, as defined in \eqref{eq:indicator_func}, the proximal operator of $f(x)$ is given as
\begin{equation}
\prox_f(x) = x - \bar{\textbf{x}},
\end{equation}
where $\bar{x}$ is the mean of $x$.
\end{proposition}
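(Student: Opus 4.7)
The plan is to invoke the earlier proposition stating that the proximal operator of the indicator function of a nonempty closed convex set equals the orthogonal projection onto that set, and then carry out the projection calculation explicitly for the hyperplane $C = \{u : \mathbf{1}^T u = 0\}$.

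First I would verify the hypotheses: $C$ is clearly nonempty (contains $0$), it is closed (preimage of $\{0\}$ under the continuous linear map $u \mapsto \mathbf{1}^T u$), and convex (affine subspace). Hence the cited proposition applies, giving $\prox_f(x) = P_C(x)$, which by definition solves
\begin{equation*}
\minimize_u \quad \tfrac{1}{2}\|u - x\|_2^2 \quad \subjto \quad \mathbf{1}^T u = 0.
\end{equation*}

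Next I would solve this constrained quadratic program by Lagrange multipliers. Writing $L(u,\mu) = \tfrac{1}{2}\|u-x\|_2^2 + \mu\,\mathbf{1}^T u$, the stationarity condition $\nabla_u L = 0$ yields $u = x - \mu \mathbf{1}$. Substituting into the feasibility condition $\mathbf{1}^T u = 0$ gives $\mathbf{1}^T x - \mu M = 0$, i.e.\ $\mu = \mathbf{1}^T x / M$, which is exactly the scalar mean of $x$. Plugging back, $u^\star = x - \bar{\mathbf{x}}$, where $\bar{\mathbf{x}}$ denotes the vector whose entries are all equal to the scalar mean of $x$. This is the claimed formula.

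I would briefly note uniqueness and optimality: the objective $\tfrac{1}{2}\|u-x\|_2^2$ is strongly convex, the feasible set is nonempty and convex, so the minimizer is unique, and the KKT conditions above are both necessary and sufficient. The main obstacle is essentially notational: making clear that $\bar{\mathbf{x}}$ is to be read as the constant vector $(\mathbf{1}^T x/M)\,\mathbf{1}$ rather than a scalar, so that the subtraction $x - \bar{\mathbf{x}}$ is well-defined component-wise. No deeper difficulty arises, since the projection onto a hyperplane through the origin is a standard elementary computation.
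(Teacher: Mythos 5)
Your proposal is correct and follows essentially the same route as the paper: both reduce $\prox_f$ to the orthogonal projection onto the hyperplane $\{u:\mathbf{1}^Tu=0\}$ and then compute that projection, the only difference being that the paper quotes the closed-form affine-projection formula $P_{C_1}(x)=x-A^T(AA^T)^{-1}(Ax-b)$ with $A=\mathbf{1}^T$, $b=0$, while you re-derive the same expression via Lagrange multipliers. Your explicit remark that $\bar{\mathbf{x}}$ must be read as the constant vector $(\mathbf{1}^Tx/M)\,\mathbf{1}$ is a welcome clarification of the paper's notation.
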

By \eqref{eq:prox_indicator}, the proximal operator of an indicator function is given as 
\begin{equation*}
\prox_f(x) = P_S(x).
\end{equation*}
In this case $S = \{u: \textbf{1}^T u = 0\}$, so this is an affine set. It is known that the orthogonal projection onto an affine set $C_1 = \{u: Au = b\}$ is given as
\begin{equation*}
P_{C_1}(x) = x - A^T(AA^T)^{-1}(Ax -b) .
\end{equation*}
Taking $A = \textbf{1}^T$ and $b=0$, we obtain
\begin{equation*}
P_{S}(x) = x - \textbf{1} \Big[ \frac{\textbf{1}^Tx}{M}\Big],
\end{equation*}
where we note that the second term is the mean of x, denoted as $\bar{x}$. Consequently, we have
\begin{equation*}
\prox_f(x) = x - \bar{\textbf{x}}.
\end{equation*}

Now, taking $g(x) = f(x) + \gamma \|x\|_2^2$ and by \eqref{eq:prox_reg}, the proximal operator of $g(x)$ is given as
\begin{equation*}
\prox_g(x) = \prox_f\Bigg( \frac{1}{1+2\gamma} x\Bigg) .
\end{equation*}

\subsection{Proximal operator of $g^*(x)$}
\label{sec:prox_g}
\begin{proposition}
Given $g(x)$ defined as $g(x) =  \sum_{n=1}^N w_n \textrm{LSE}(1-e_n^Tx)$, where $\textrm{LSE}(t) = \log(1+e^t)$, the proximal operator of the conjugate function $g^*(x)$ is given as
\begin{equation}
\prox_{\tau g^*}(x) = x -  \tau   ( \prox_{\tilde{w}_n \tilde{g}_n}(\tilde{x}_n) )_{n=1}^N,
\end{equation}where  $\tilde{x}_n = x_n/\tau$, $\tilde{w}_n = w_n/\tau$,  $\tilde{g}_n (y_n) = \textrm{LSE}(1-y_n) $, and $\textrm{prox}_{\tilde{w}_n \tilde{g}_n}(\tilde{x}_n)$ is the solution of 
\begin{equation*}
\frac{-e^{1-u}}{1+e^{1-u}}+\frac{u-\tilde{x}_n}{\tilde{w}_n}=0\end{equation*}
with respect to $u$.\end{proposition}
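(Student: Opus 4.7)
The plan is to handle the two proximal operators separately, both by reducing them to standard facts: for $f$, a projection onto an affine set combined with the quadratic-regularization prox rule; for $g^*$, Moreau's identity combined with separability across coordinates.

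For the prox of $f(x) = i_{\{u:\mathbf{1}^Tu=0\}}(x) + \gamma\|x\|_2^2$, I would first observe that $f$ is the sum of an indicator function of an affine set and a quadratic. I would apply the regularization rule (the stated Proposition on prox with regularization, with $\phi$ the indicator and $\rho=2\gamma$, $a=0$), which reduces $\textrm{prox}_{\tau f}(x)$ to $\textrm{prox}_{\tilde\tau\, i_{\{\mathbf{1}^Tu=0\}}}$ evaluated at $x/(1+2\gamma\tau)$. The prox of an indicator function of a nonempty closed convex set is the orthogonal projection; for the affine hyperplane $\{u:\mathbf{1}^Tu=0\}$ the closed-form projection is $u \mapsto u - \bar u\,\mathbf{1}$, which directly yields $\textrm{prox}_{\tau f}(x)=x'-\bar{\mathbf{x}'}$ with $x'=x/(1+2\gamma)$ (absorbing the step size into the statement of the proposition).

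For the prox of $g^*$, the plan is to avoid computing $g^*$ explicitly and instead invoke Moreau's decomposition, $\textrm{prox}_{\sigma g^*}(x) = x - \sigma\,\textrm{prox}_{g/\sigma}(x/\sigma)$. The key structural observation is that $g(y)=\sum_{n=1}^N \omega_n \log(1+e^{1-e_n^T y}) = \sum_{n=1}^N \omega_n \log(1+e^{1-y_n})$ is fully separable across coordinates, so the prox decomposes into $N$ independent scalar problems, each of the form $\textrm{prox}_{\tilde\omega_n \tilde g_n}(\tilde x_n)$ with $\tilde g_n(u)=\log(1+e^{1-u})$ and the rescaled arguments $\tilde x_n = x_n/\sigma$, $\tilde\omega_n = \omega_n/\sigma$. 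Each scalar prox is by definition the minimizer of $\tilde\omega_n \log(1+e^{1-u}) + \tfrac12(u-\tilde x_n)^2$, and the first-order optimality condition (differentiate and set to zero) gives exactly the transcendental equation stated in the proposition. Concatenating the $N$ scalar solutions yields the vector formula.

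The main obstacle is simply the absence of a closed form in the scalar prox: the equation $-e^{1-u}/(1+e^{1-u}) + (u-\tilde x_n)/\tilde\omega_n = 0$ cannot be solved analytically, so I would accept that $\textrm{prox}_{\tilde\omega_n\tilde g_n}$ must remain implicitly defined. To justify that the formula is well posed, I would briefly note that $\tilde g_n$ is strictly convex and differentiable, so the objective in \eqref{eq:prox_LSE_og} is strongly convex and the minimizer (hence the root of the optimality equation) exists and is unique, making a scalar Newton or bisection solver a practical inner routine. A minor bookkeeping point to check is the consistency between $\sigma$ and $\tau$ in the rescaling introduced by Moreau's identity, which amounts to carefully tracking the constants through the indicator-projection reduction and the scalar-prox rescaling.
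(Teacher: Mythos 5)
Your argument for $\prox_{\tau g^*}$ is correct and follows essentially the same route as the paper: Moreau decomposition to avoid computing $g^*$ explicitly, coordinate-wise separability of $g$, absorption of the weight $w_n$ into the prox scaling, and the scalar first-order optimality condition yielding the implicit equation. Your added remarks on the strong convexity of the scalar subproblem (hence existence and uniqueness of the root) and on tracking $\sigma$ versus $\tau$ are sensible refinements but do not change the substance of the proof.
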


By Moreau decomposition~(see \cite{book:proximal}, for instance), we have that 
\begin{equation*}
\prox_{\tau g^*}(x) = x - \tau  \prox_{\tau^{-1} g}(x/\tau).
\end{equation*} and since $g$ is separable we have  
\begin{equation*}
\prox_{\tau g^*}(x) = x - \tau   ( \prox_{\tau^{-1} g_n}(x_n/ \tau) )_{n=1}^N,
\end{equation*}
where $g_n (y_n) =  w_n \textrm{LSE}(1-y_n) $. Then, for each component, we have that
\begin{equation*}
\prox_{\tau^{-1} g_n}(x_n/\tau) = \prox_{w_n/ \tau \tilde{g}_n}(x_n/\tau),
\end{equation*}
where $\tilde{g}_n (y_n) =   \textrm{LSE}(1-y_n) $ and $w_n>0$. Then, defining  $x_n/\tau$ as  $\tilde{x}_n$ and $w_n/\tau$ as $\tilde{w}_n$, we can obtain 
\begin{equation*}
\prox_{\tilde{w}_n \tilde{g}_n}(\tilde{x}_n)
\end{equation*} from the solution of Equation~\ref{eq:prox_LSE_final}.Therefore, we have that 
\begin{equation*}
\prox_{\tau g^*}(x) = x -  \tau   ( \prox_{\tilde{w}_n \tilde{g}_n}(\tilde{x}_n) )_{n=1}^N.
\end{equation*}

\section{Further insight into weights}\label{app:weights_plot}
Visual support for the explanation of the correspondence between $w$ and $\omega$ can be found in Figure~\ref{fig:weights_omega}.
\begin{figure}[ht]
	\centering
	\includegraphics[width=\linewidth]{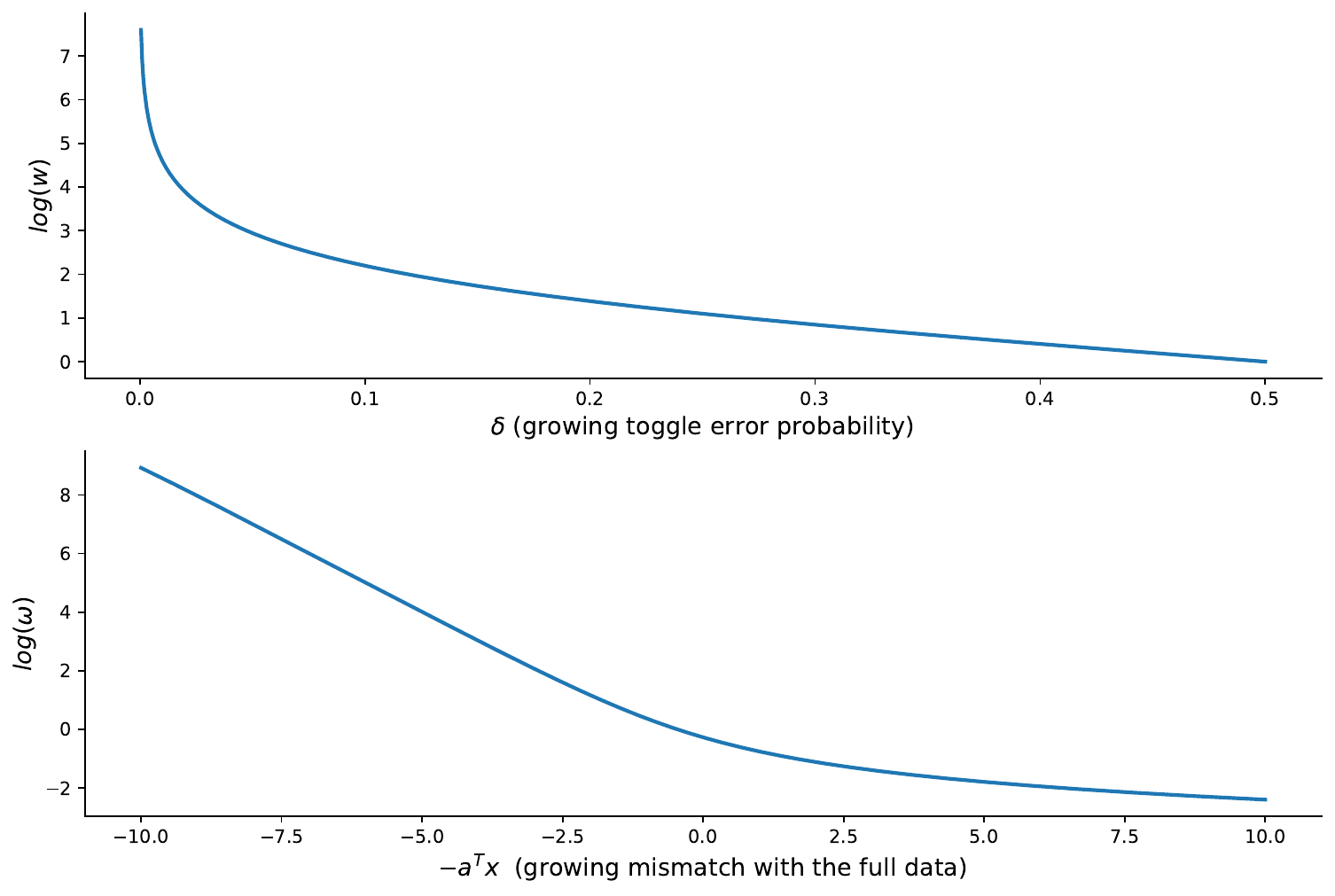}
	\caption{Comparison between variation of $w$ and $\omega$. On the top we vary $\delta$, the noise, between $0$ and $0.5$, depicting the correspondent evolution of $w$. On the bottom we vary the different between two items according to the proposed ranking, with the corresponding $\omega$, according to \eqref{eq:omega}.} \label{fig:weights_omega}
\end{figure}

\section{Experiments}\label{appendix:additional_exp}

\subsection{Additional details on settings}\label{appendix:exp_setting}

\paragraph{Data generation} We generate simulated data with $m$ items, where the ranking corresponds to a random permutation of integers between $1$ and $m$. Then, we randomly select $n$ pairs, with the possibility of selecting repeated pairs. For each pair we take its true label $y_n^{*}$ and create a noisy one by multiplying toggle noise with probability $\delta_n$. For each simulation, we run 50 Monte Carlo trials. When taking the largest connected or strongly connected components we ensure that $m$ is at least $0.8$ of the experiment value.

\paragraph{PD-Rank} For the step-sizes in PDHG we take the maximum values that ensures convergence $\tau = \sigma = \sqrt{ \frac{1}{\|A\|^2_S}}$, $\lambda_n = 1.9$. Parameter $\epsilon$ was set to $ \expnumber{1}{-2}$, it should not affect the performance as long as it is small. The remaining parameters were tested for $m=30$ and $n=1$ standard trial, with $\delta=0.1$. Performance was measured with labels $y$ accuracy. The regularization parameter $\gamma = \expnumber{1}{-4}$ was tested for $[\expnumber{1}{-5},\expnumber{1}{-4},\expnumber{1}{-3},\expnumber{1}{-2},\expnumber{1}{-1}]$ and selected as the largest value without decrease in performance, since computation time decreases with $\gamma$. For the stopping criteria please refer to~\ref{subsec:stop}. 

\paragraph{Benchmark methods} We implemented HodgeRank \cite{article:HodgeRank} with the linear model. MS \cite{article:Minimax} uses half of the available comparisons to estimate the level of noise, which explains its low accuracy overall. The authors assume in the paper that there are $2N$ observations available, but this would not be a fair setting with respect to the remaining methods.  When applicable, the parameters used for the benchmark methods were the ones suggested by the authors.

\paragraph{Simulations} All experiments were run on a 2.60GHz Intel(R) Core(TM) i7-6700HQ CPU with 4 cores and 16GB RAM.

\subsection{Additional Results for Simulated Data}\label{appendix:exp_sim}
\subsubsection{Stopping criteria}\label{subsec:stop}

We compare our algorithm with the solution obtained from a standard solver -- CVXPY \cite{code:cvx_py} -- iteratively applied to Problem~\eqref{eq:cvxSubproblem}.
The stopping criterion for the PDHG algorithm (i.e. the inner cycle) is taken as the difference in cost between consecutive iterations, as a fraction of the first, smaller than $\epsilon_{in}$. For the outer cycle, we take either the stabilization of the weight values or the score values, by taking the difference between the current and previous value, as a faction of the previous. We call this parameter $\epsilon_{\textrm{out}}$ and set it to $ \expnumber{1}{-2} $, except for the large-scale experiments ($m>500$), when it is set to $ \expnumber{1}{-1} $. In Figure~\ref{fig:epsilon_in}, we can see that $\epsilon_{in}$ has a large impact on the computation time of our algorithm, but it can be increased up to $\expnumber{1}{-2}$ with minimal decrease in the accuracy and to $\expnumber{1}{-3}$ with no decrease. We have sett $\epsilon_{\textrm{in}} = \expnumber{1}{-2} $ for $m \leq 500$ and $\epsilon_{\textrm{in}} = \expnumber{1}{-3} $, otherwise. Nonetheless, it is evident the advantage we obtain in using our algorithm compared to a naive implementation.

\begin{figure}[ht]
	\centering
	\includegraphics[width=	\columnwidth]{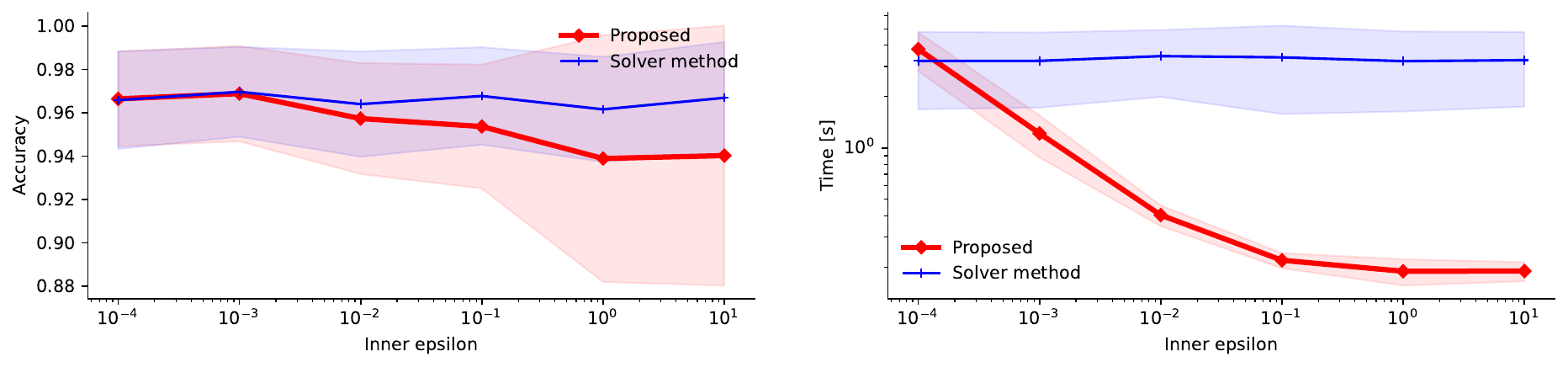}
	\caption{Variation of label accuracy and total computation time with $\epsilon_{in}$. The left figure depicts the accuracy of predicted labels for each observation with respect to the true labels. The shaded area corresponds to $1$ standard deviation. }\label{fig:epsilon_in}
\end{figure}

\subsubsection{Outliers}
We consider a setting with 5 items with their scores corresponding to their index, i.e. item number $4$ has a score of $4$ and is ranked first. We set $n=30$ standard trials and $\delta=0.1$ for all edges except one. In the edge between node $0$ and $3$ we introduce an outlier by setting $\delta$ to $0.9$. Since $n$ is considerably high, in general all the comparisons will represent the real ordering of items with $10\%$ of opposite comparisons, except for the outlier edge (see Figure~\ref{fig:outliers} for the corresponding graph). PD-Rank is able to handle this scenario much better than all benchmark methods, retrieving the correct ranking $250$ out of $300$ times (Figure~\ref{fig:outliers_res})

\begin{figure}[ht]
	\centering
	\includegraphics[scale=0.25]{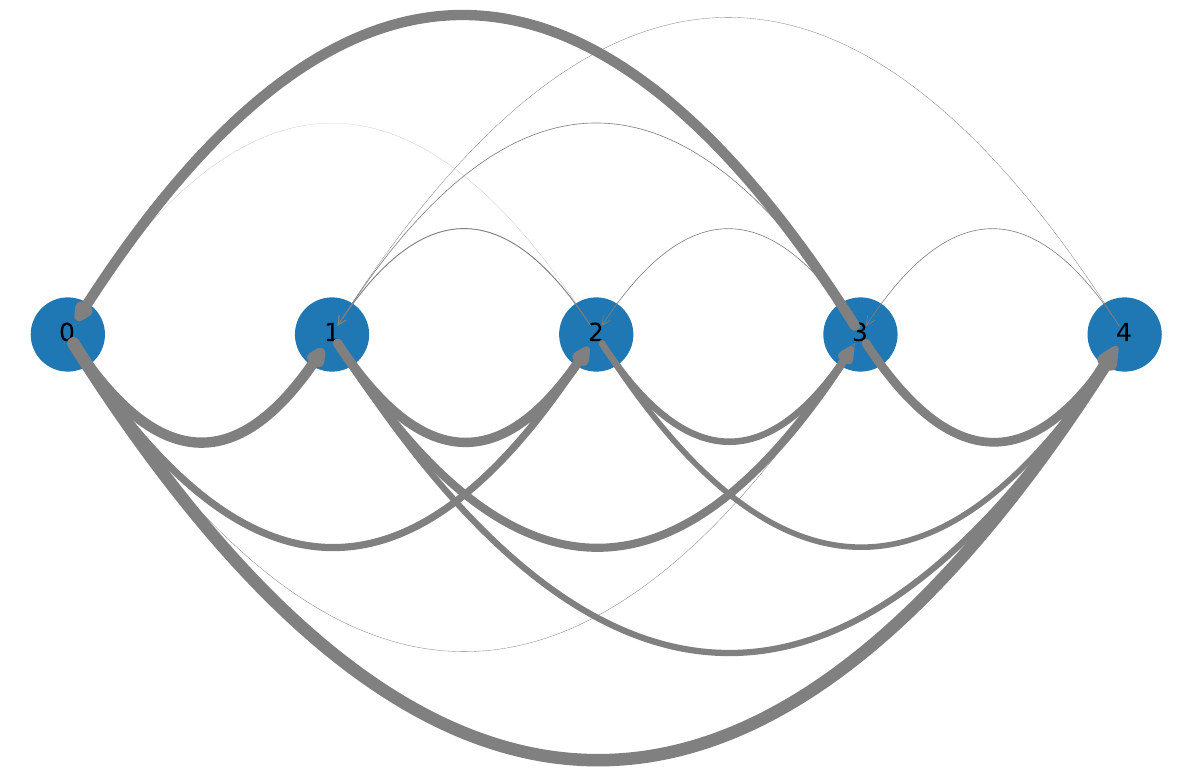}
	\caption{Example graph for observations in outliers experiment. Each node corresponds to a different item and each edge to a pairwise comparison between the corresponding nodes. The width of each edge is proportional to the number of observations for the respective pair such that thicker edges translate to more observations. Pair $0-3$ is an outlier, subjected to $\delta=0.9$, while the remaining ones are subjected to $\delta=0.1$.} \label{fig:outliers}
\end{figure}

\begin{figure}[h]
\centering
\subfloat[]{
        \begin{tabular}[b]{l|lllll}
                  & \multicolumn{5}{l}{Ranking} \\ \hline
        GT        & 4   & 3   & 2   & 1   & 0   \\
        PD-Rank   & 4   & 3   & 2   & 1   & 0   \\
        Borda     & 4    &   2  & 3    & 0    &   1  \\
        HodgeRank & 4    &    2 &  3   &  0   &  1   \\
        I-LSR     & 4    &     2&   0  &   1  & 3    \\
        ASR       &  4   &   2  &    0 &    1 &3     \\
        RC        &   4  &    2 &    3 &     0&     1\\
        SVM       &    0 &    1 &     4&     2&    3 \\
        MS        & 4    & 2    & 0    &3     &1    
        \end{tabular}
    \label{fig:sub1}
}

\subfloat[]{
\includegraphics[width=0.4\linewidth]{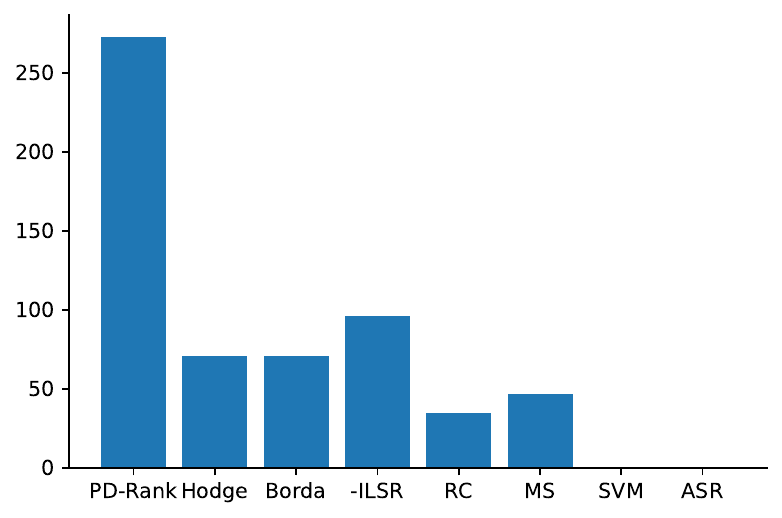}
\label{fig:sub2}
}
\caption{Results for outlier experiments corresponding to the graph in \ref{fig:outliers}. On the left we have an example of ranking where PD-Rank is able to uncover the full correct rank and the other methods fail. On the right the number of times each method is able to get the exact ranking out of 300 trials.}
\label{fig:outliers_res}
\end{figure}

\subsubsection{BT model with connected graph}\label{app:BT_connected}
We present the results for the same setting as Figure~\ref{fig:BTModel} of the main mauscript, but for the largest connected component instead of strongly connected. We can observe that they are similar to the previous ones, except some methods can no longer be applied. The results are presented in Figure~\ref{fig:BTModel_cnn}.
\begin{figure}[H]
\centering
	\begin{subfigure}{.45\textwidth}
	\centering
		\includegraphics[width=\textwidth]{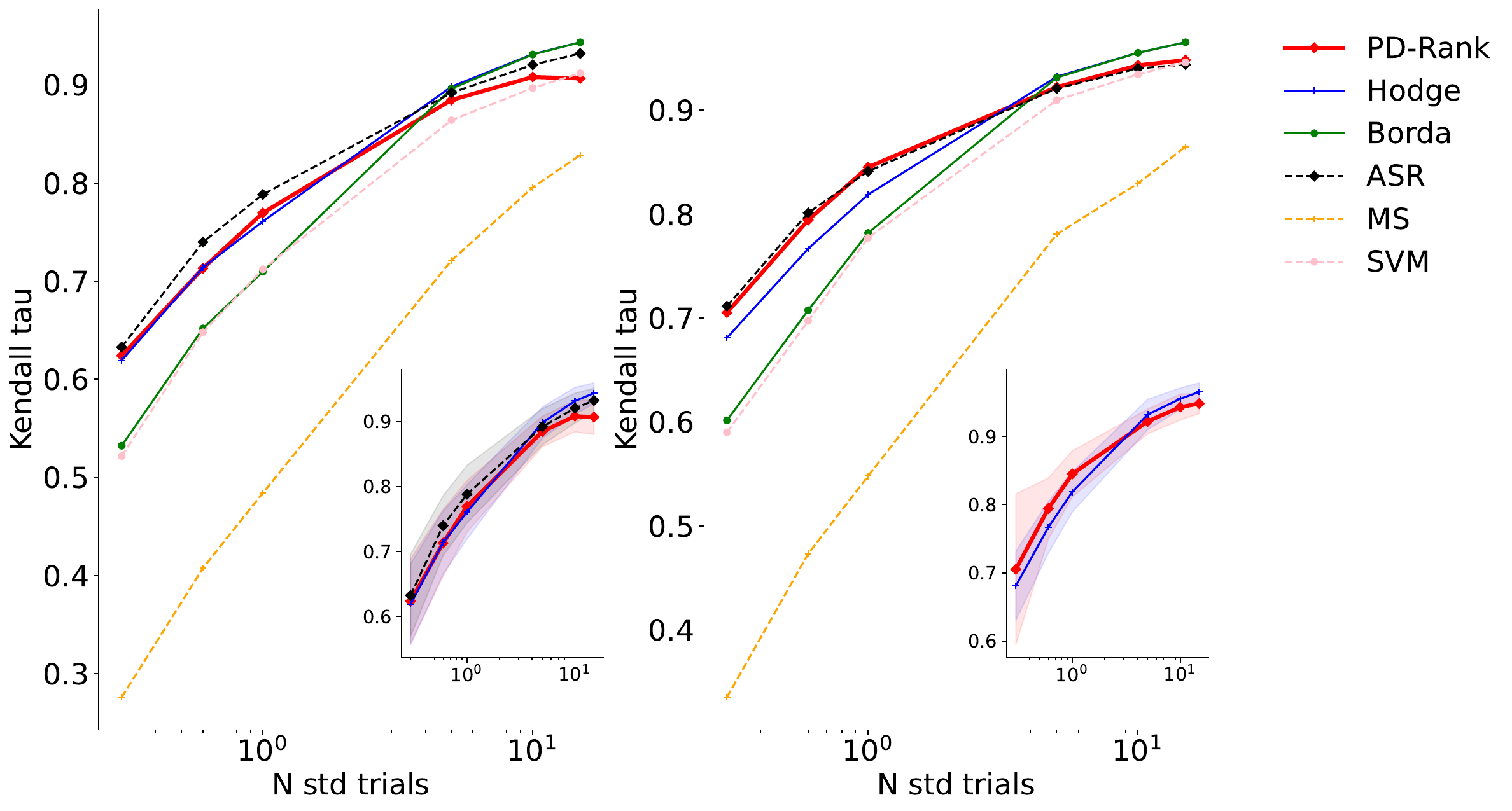}
		\caption{Kendall tau}
	\end{subfigure}	
	\begin{subfigure}{.45\textwidth}
		\centering
		\includegraphics[width=\textwidth]{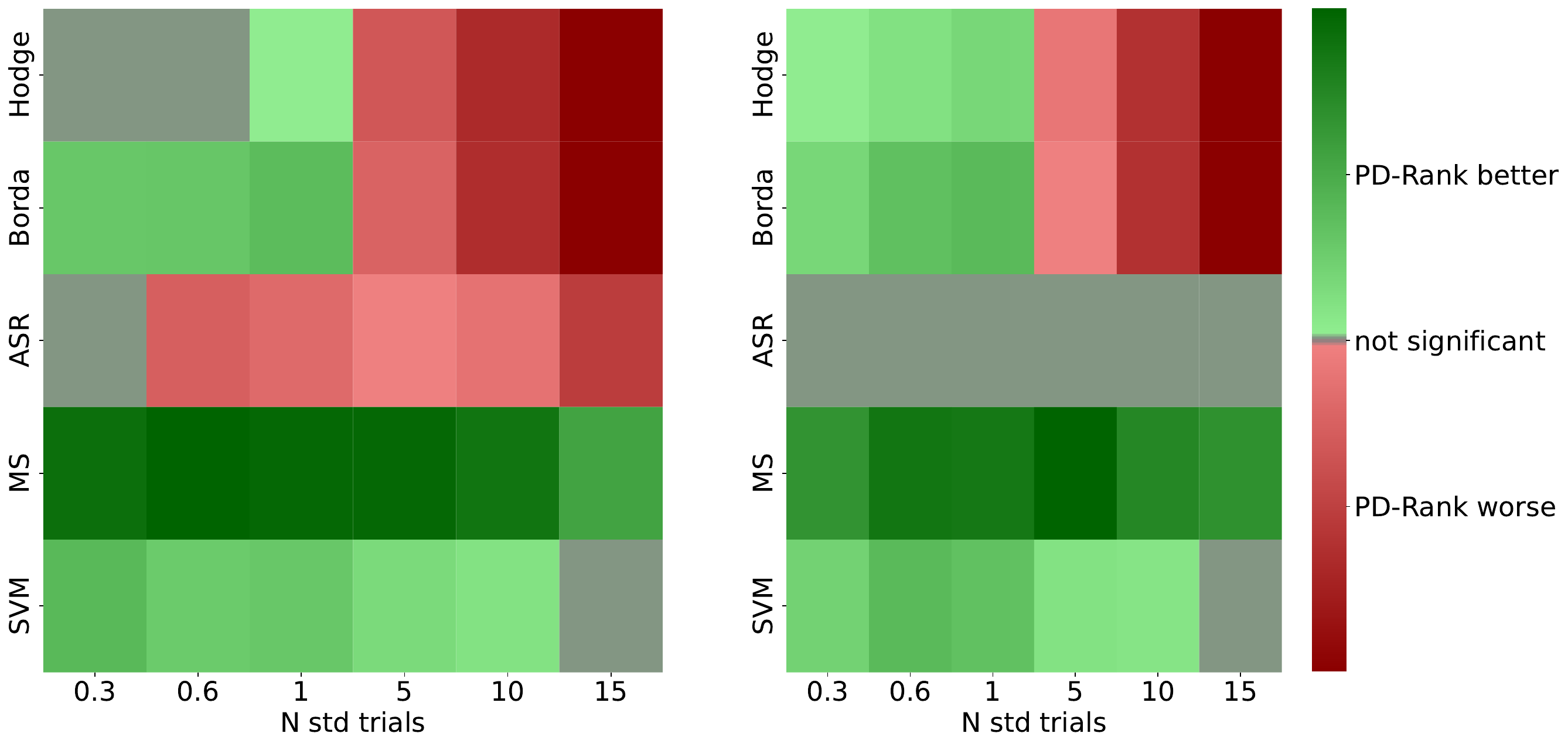}
		\caption{Results of paired t-test between Kendall scores of PD-Rank and other methods}
	\end{subfigure}	
	\caption{Simulated data according to the BT model for connected graphs, on the left scores in range $[0,5]$ and on the right $[0,10]$. For the heatmap, the color is proportional to the statistic value, being green if PD-Rank performs better and red, otherwise; if the p-value is smaller then $0.05$ the color is set to grey. }
	\label{fig:BTModel_cnn}
\end{figure}

\subsubsection{Additional Metrics} In the main paper, we present Kendall's Tau as the primary metric for evaluating our method. Here, we extend our evaluation by including additional metrics: Spearman's rank correlation coefficient and the number of items correctly retrieved in the top-K. The top-K metric is defined as the size of the intersection between the predicted top-K items, $X^k$, and the true top-K items, $Y^k$, i.e., $\big| X^k \cap Y^k\big|$.

\paragraph{Independent noise model (Figure~\ref{fig:ourModel_AddMetrics}).} We consider the same setting as in Figure~\ref{fig:ourModel}, for a model with noise independent of item scores. We present additional metrics with paired t-tests between the scores of PD-Rank and the other methods. PD-Rank generally outperforms the other methods. An exception is SVM, which demonstrates a superior performance when a large number of comparisons is available (15 standard trials) in both Kendall's and Spearman's metrics, for strongly connected graphs. For lower values of $n$, there is not enough information to retrieve a meaningful rank and the Spearman's score are not statistically relevant, while for the remaining metrics PD-Rank surpasses other methods. For larger values of $n$, PD-Rank is less competitive in retrieving the top-K items.

\begin{figure}[t]
	\begin{subfigure}{\columnwidth}
		\includegraphics[width=\columnwidth]{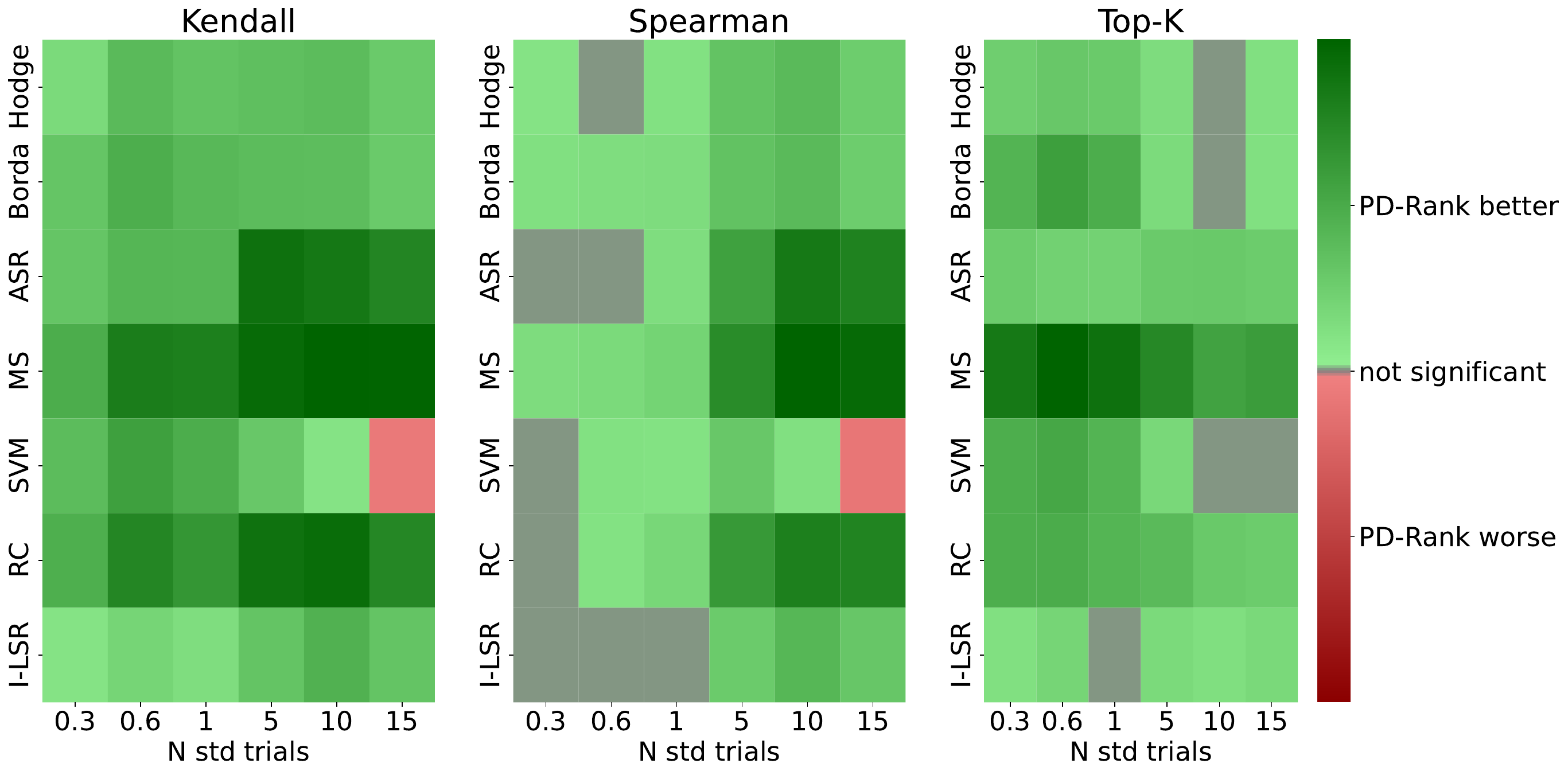}
		\caption{Stronlgy connected graphs.}\label{fig:spe}
	\end{subfigure}
	\begin{subfigure}{\columnwidth}
		\includegraphics[width=\columnwidth]{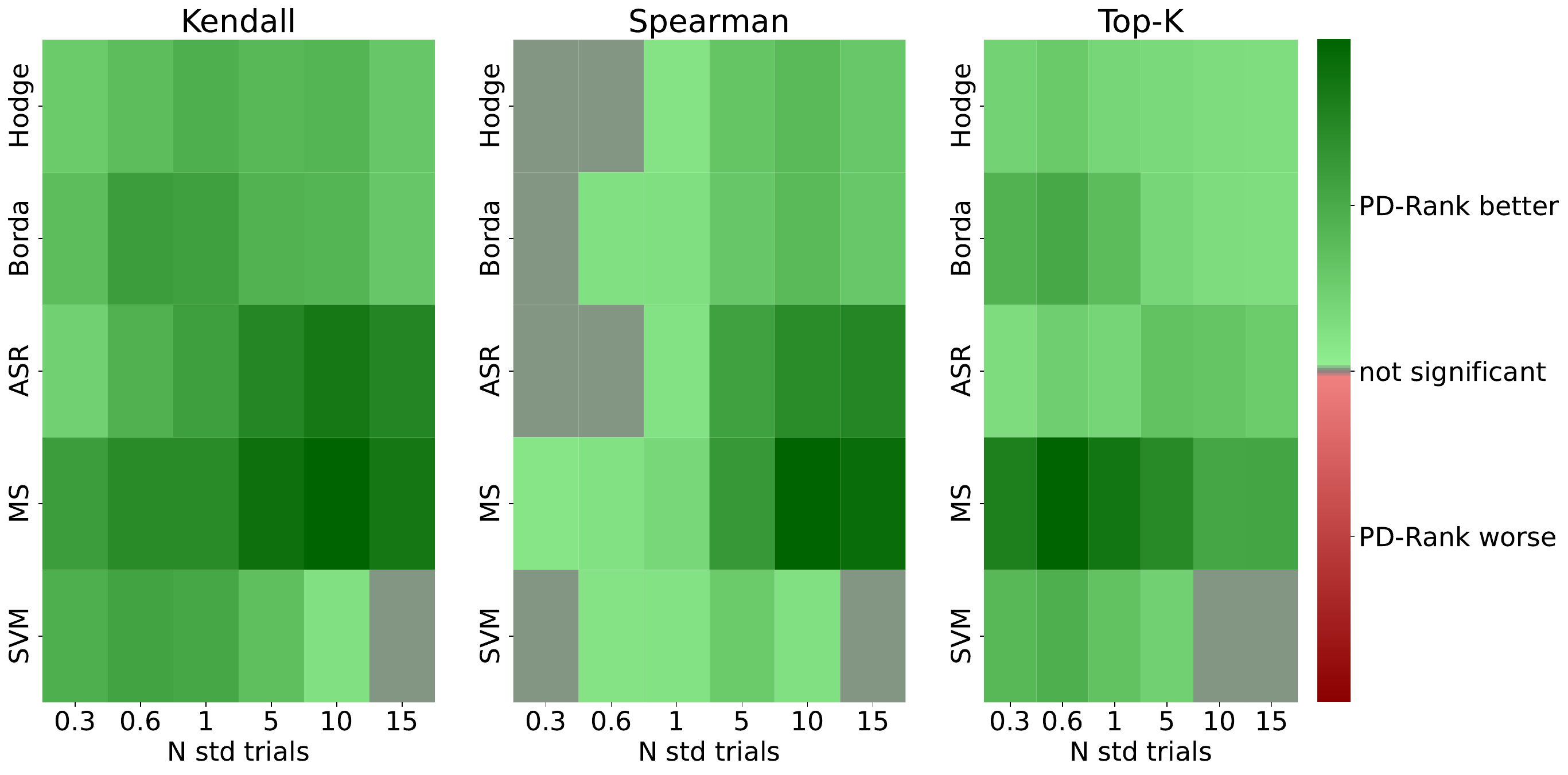}
		\caption{Connected graphs.}\label{fig:d}
	\end{subfigure}	
	\caption{Simulated data with noise independent of score value, for the largestst connected and strongly connected graphs. Variation with the number of comparisons. The color is proportional to the statistic value, being green if PD-Rank performs better and red, otherwise; if the p-value is smaller than $0.05$ the color is set to grey. We observe the Kendall tau, Spearman coefficient and Top-k retrieval.}
	\label{fig:ourModel_AddMetrics}
\end{figure}

\paragraph{BT model (Figure~\ref{fig:BTModel_Extra}).} We consider the same setting as in Figure~\ref{fig:BTModel}.
PD-Rank has a similar or superior performance to the non-parametric models SVM and MS, while Borda (the third non-parametric approach), performs better for larger values of $n$. I-LSR consistently outperforms or matches PD-Rank across all metrics. The remaining parame methods (RC, Hodge Rank, and ASR) are are in general less competitive for lower values of $n$ and superior for higher values. There is no substancial difference across the three metrics.

\begin{figure}[t]
	\begin{subfigure}{\columnwidth}
		\includegraphics[width=\columnwidth]{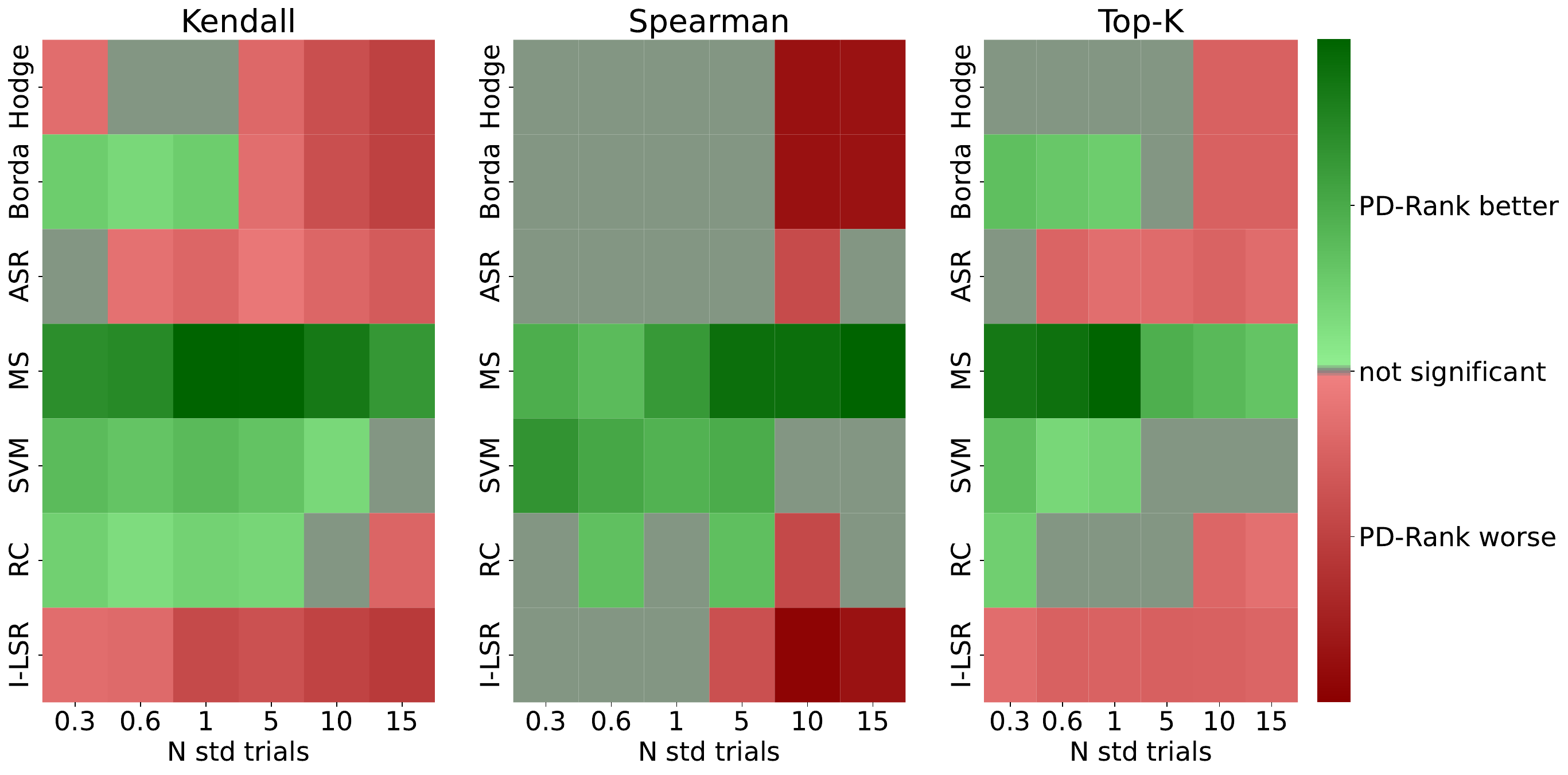}
		\caption{Strongly connected}
	\end{subfigure}	
	\begin{subfigure}{\columnwidth}
		\includegraphics[width=\columnwidth]{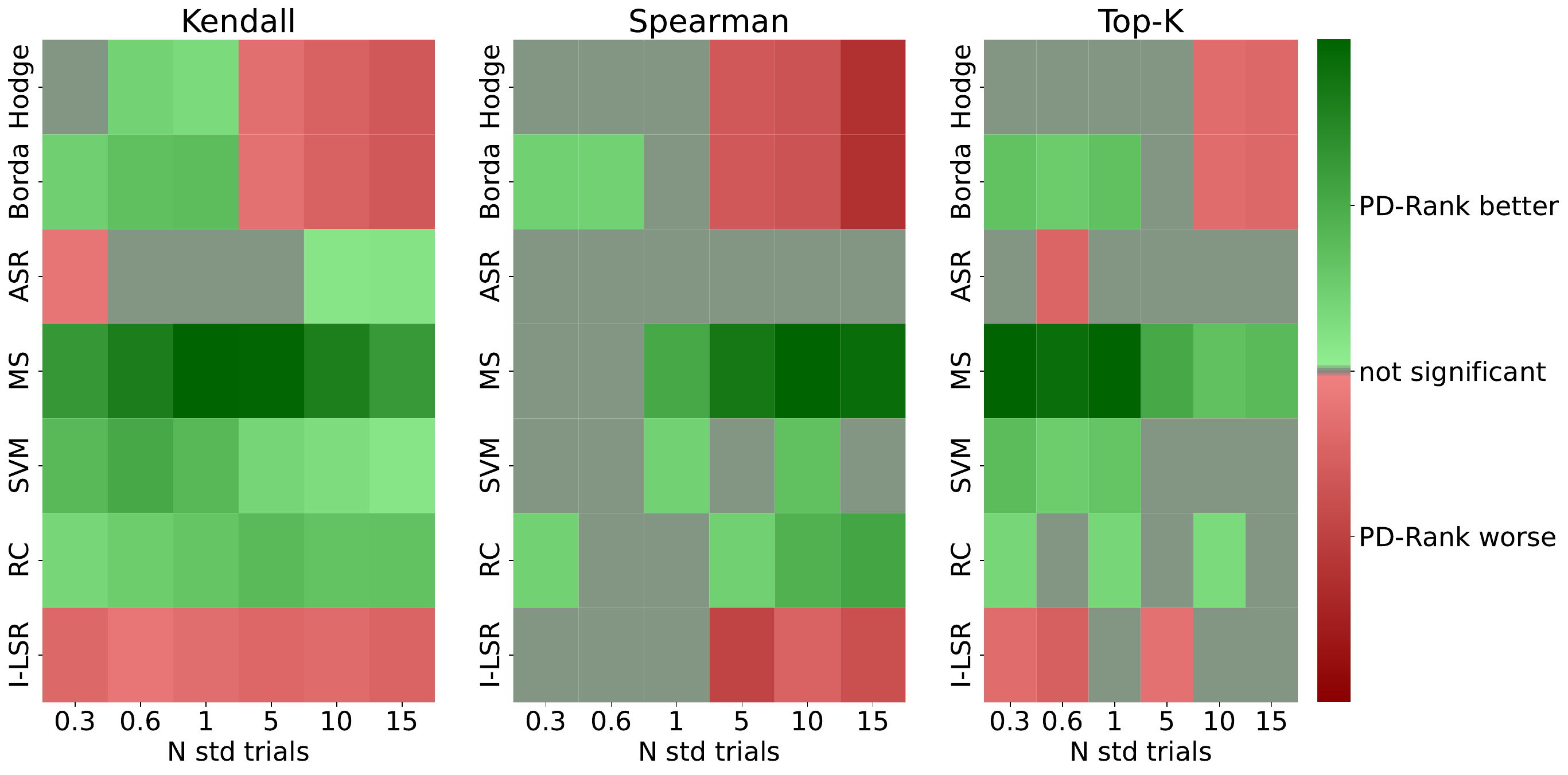}
		\caption{Connected}
	\end{subfigure}	
	\caption{Simulated data according to the BT model for strongly connected and connected graphs, on the left scores in range $[0,5]$ and on the right $[0,10]$. For the heatmap, the color is proportional to the statistic value, being green if PD-Rank performs better and red, otherwise; if the p-value is smaller than $0.05$ the color is set to grey. }
	\label{fig:BTModel_Extra}
\end{figure}

\paragraph{Large scale setting (Figure~\ref{fig:largedata_app}).} We consider the same setting as in Figure~\ref{fig:largedata}. In this case we see a remarkable difference across the metrics. While PD-Rank has a significantly better performance in Kendall's, the opposite in observed for the Top-K retrieval, as expected. Moreover, in terms of Spearman's coefficient, there is no statistical significant difference between PD-Rank and the other methods, except for $m=100$, where PD-Rank is superior. This corresponds to approximately $0.1$ standard trial.

\begin{figure}[t]
	\centering
		\includegraphics[width=	\columnwidth]{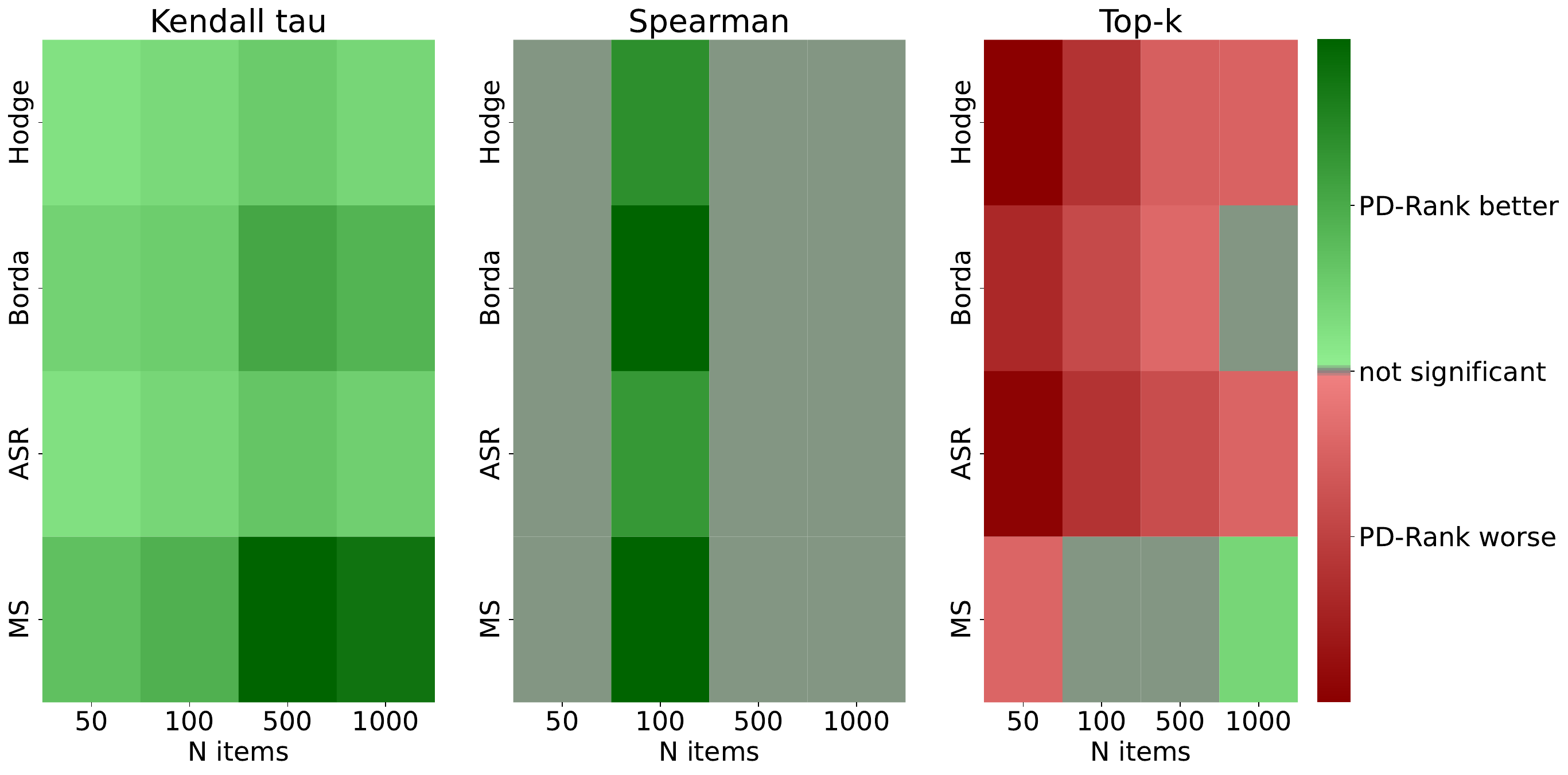}
		\caption{Large data setting, with variation of number of items $m$, with $5$ comparisons per item. The color is proportional to the statistic value, being green if PD-Rank performs better and red, otherwise; if the p-value is smaller than $0.05$ the color is set to grey. We observe the Kendall tau, Spearman coefficient and Top-k retrieval.}
		\label{fig:largedata_app}
	\end{figure}
	
\paragraph{Key points.} These results attest that PD-Rank is particularly suitable for full rank retrieval (and not Top-K retrieval), in a setting of limited pairwise comparisons. It is capable of handling noisy observations and is well-suited for scenarios where noise is independent of item scores.

\subsection{Additional Experiments for Real-world Datasets}\label{appendix:exp_real}

We include some additional references for datasets VQA (Figure~\ref{fig:VQA}) and IQA (Figure~\ref{fig:IQA}), that reinforce the conclusions obtained in the main paper. Recall that we are taking as ground truth (GT) the solution obtained with the Bradley-Terry model for the full dataset and this brings some unfairness to Borda Count and PD-Rank, as they are usually not able to reach Kendall of 1.0, since the GT according to all methods is not the same. We confirm that for higher accuracies (over $0.85$) PD-rank achieves the same Kendall as Hybrid-MST and Hodge Rank with less computation time. The latter two, being active learning methods arrive at a higher Kendall tau with less available comparisons, but they require more time, as they must be run sequentially. Borda Count is always faster than any other candidate by several orders of magnitude, but it requires a much larger data sample and often does not reach high enough values of Kendall tau.

\begin{figure*}[htb]
\subfloat[Reference 1]{%
\includegraphics[width=0.5\linewidth]{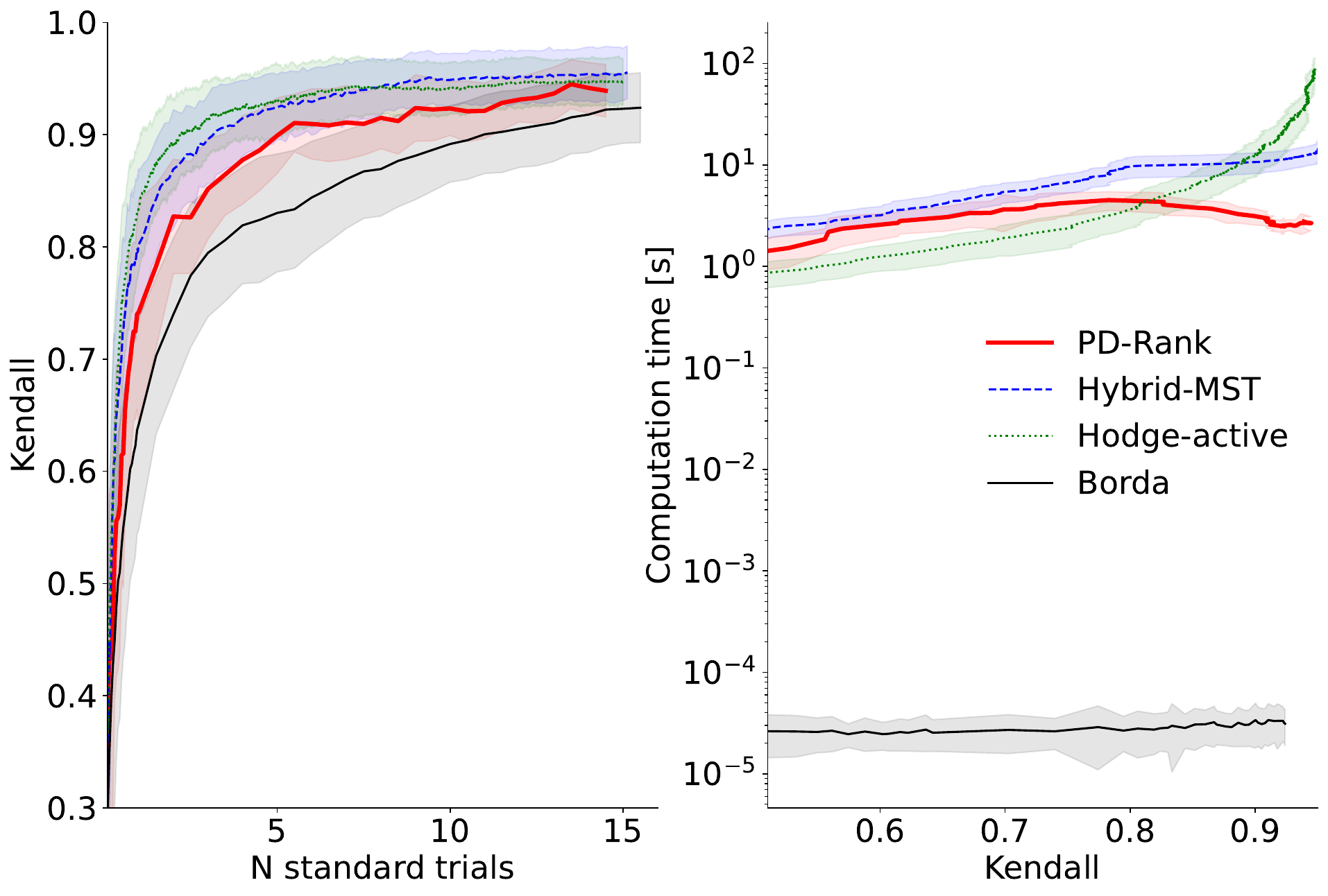}%
}
\subfloat[Reference 2]{%
\includegraphics[width=0.5\linewidth]{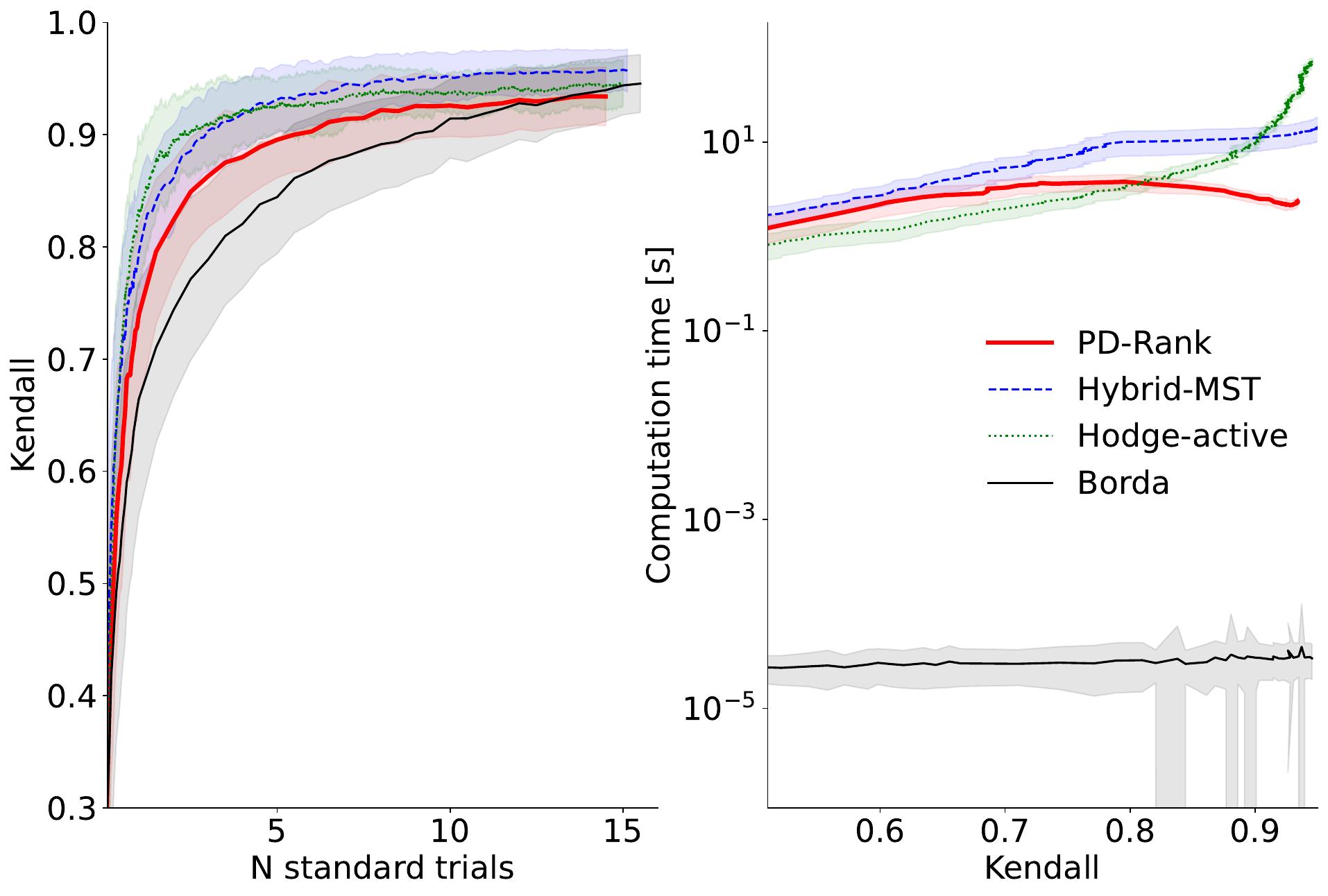}%
}

\subfloat[Reference 3]{%
\includegraphics[width=0.5\linewidth]{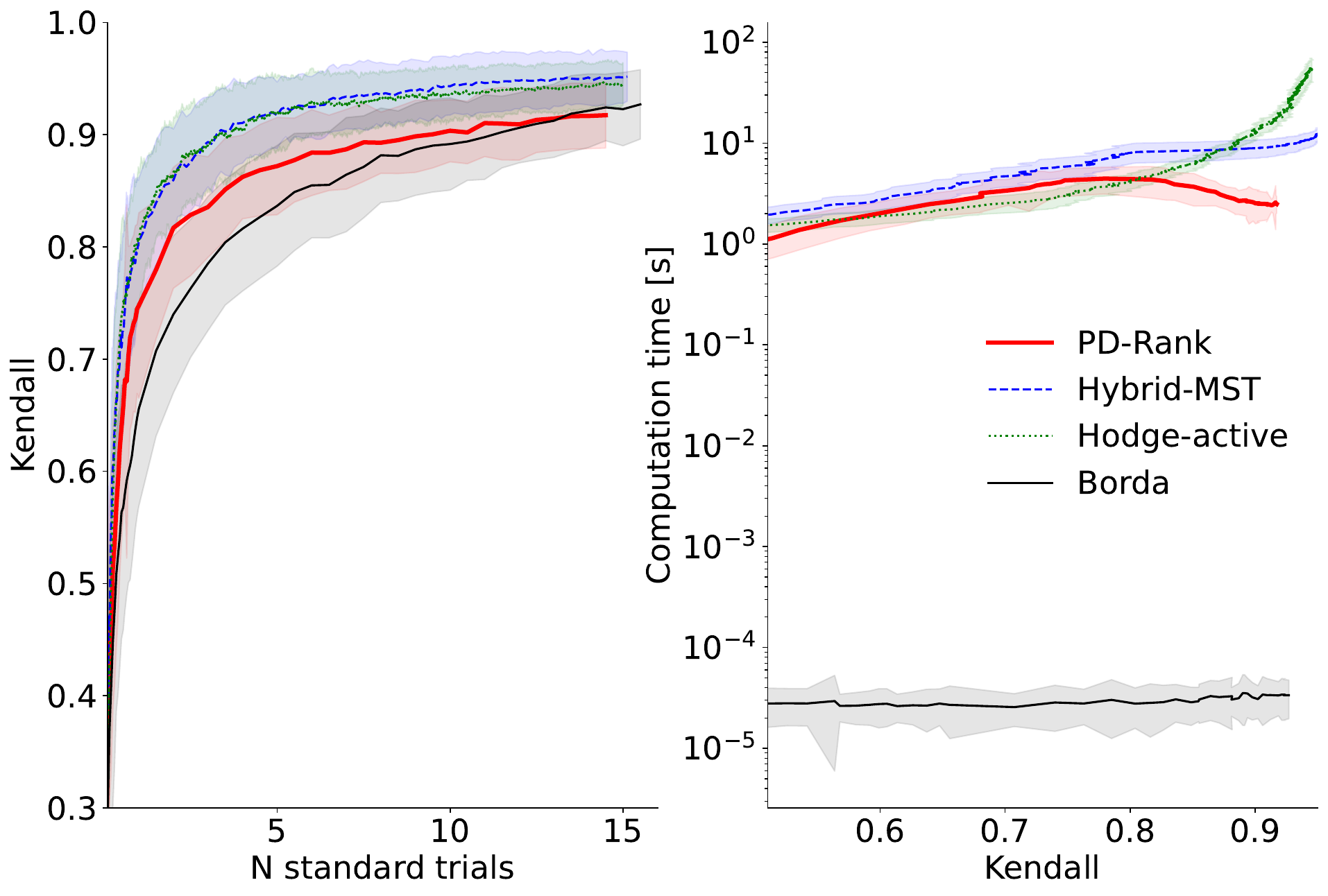}%
}
\subfloat[Reference 4]{%
\includegraphics[width=0.5\linewidth]{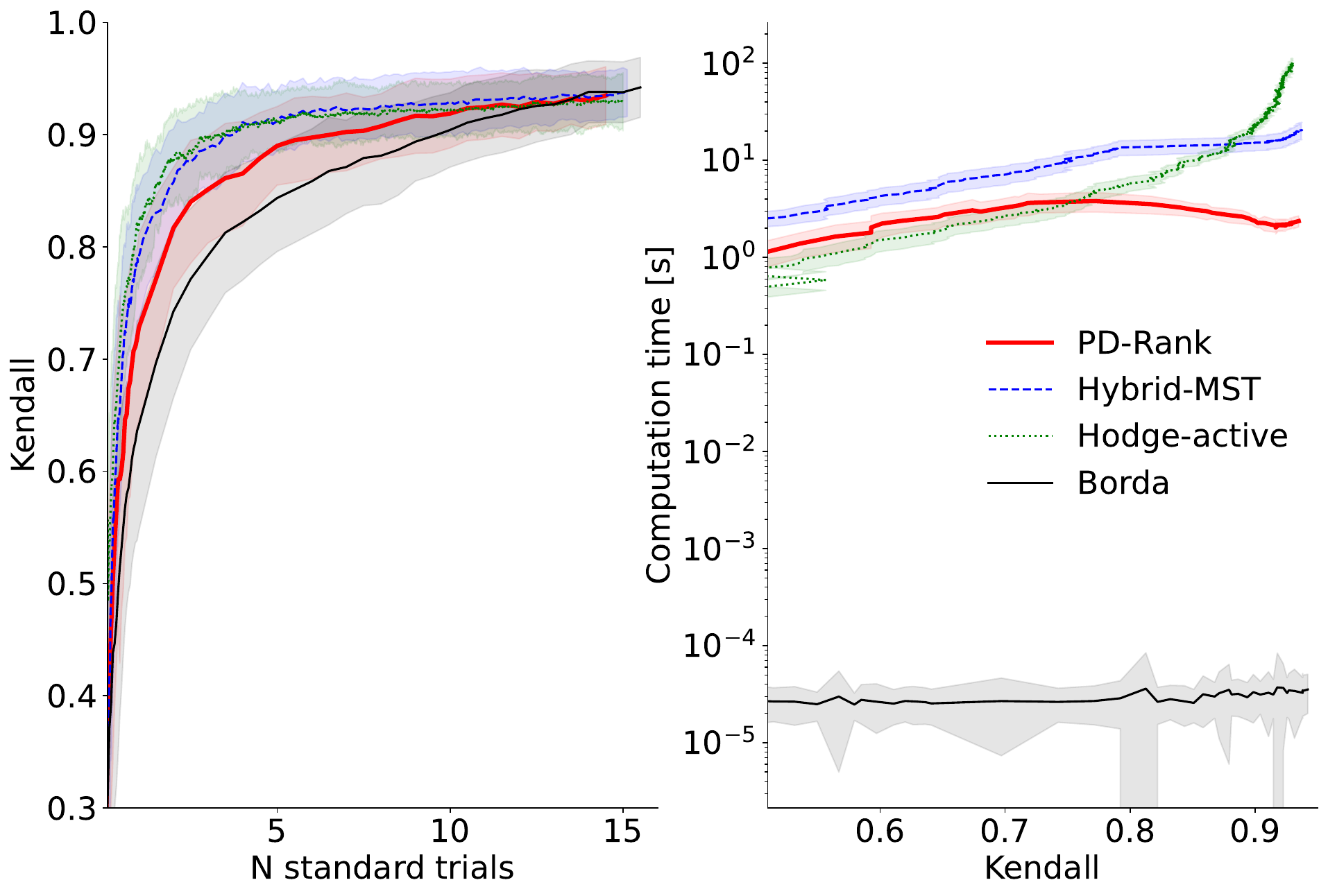}%
}	
\caption{Additional results for simulations with VQA-dataset. For each subfigure, on the left plot, we can see the evolution of ranking accuracy with the observed samples. Observed samples are expressed in terms of standard trials, where \textbf{1 standard trial} is defined as $m(m-1)/2$, for $m$ items. On the right, we show the evolution of computation time with the achieved Kendall coefficient: lines above PD-Rank require more time for the same accuracy and vice-versa.}
\label{fig:VQA}
\end{figure*}

\begin{figure*}[htb]
\subfloat[Reference 1]{%
\includegraphics[width=0.5\linewidth]{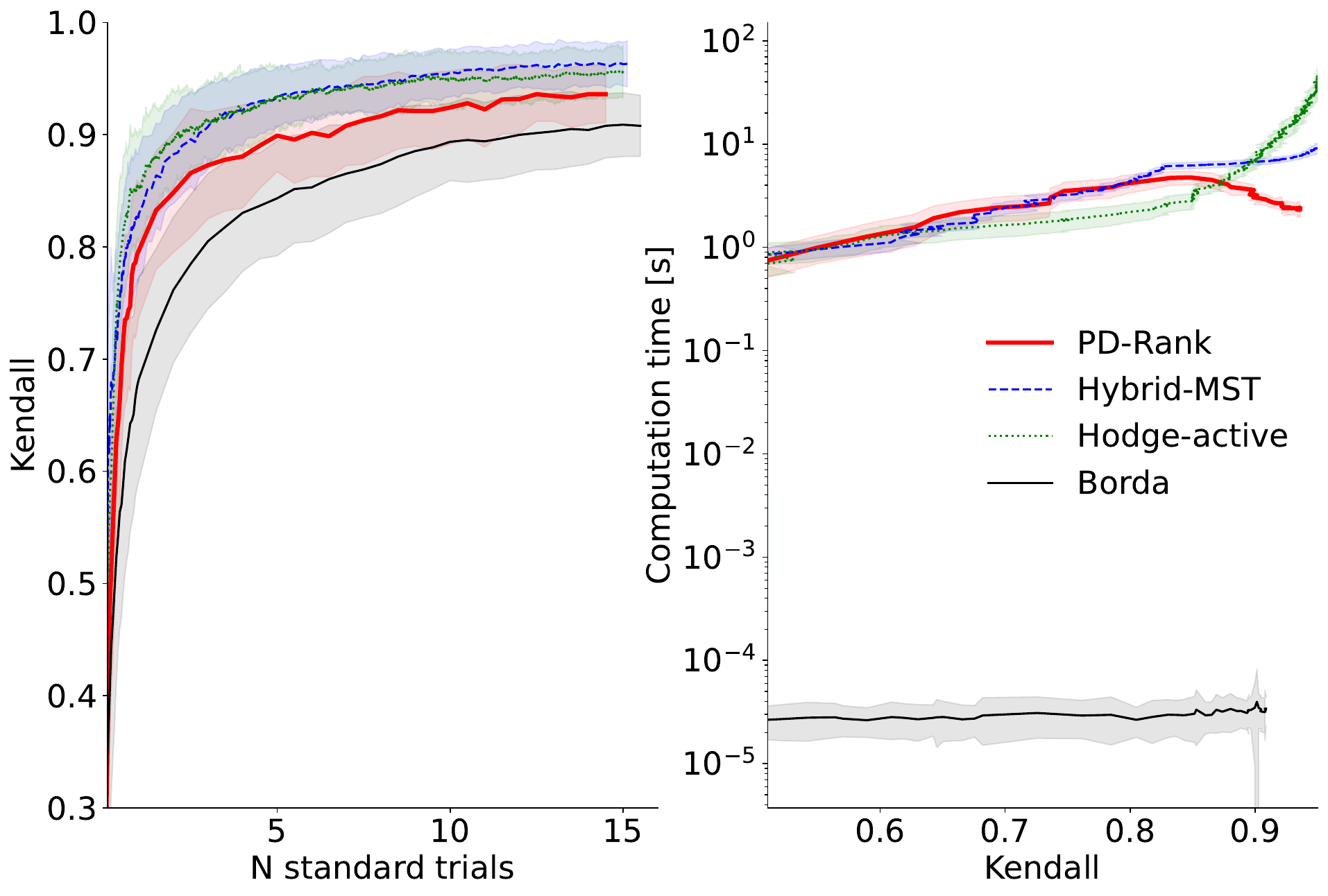}%
}
\subfloat[Reference 2]{%
\includegraphics[width=0.5\linewidth]{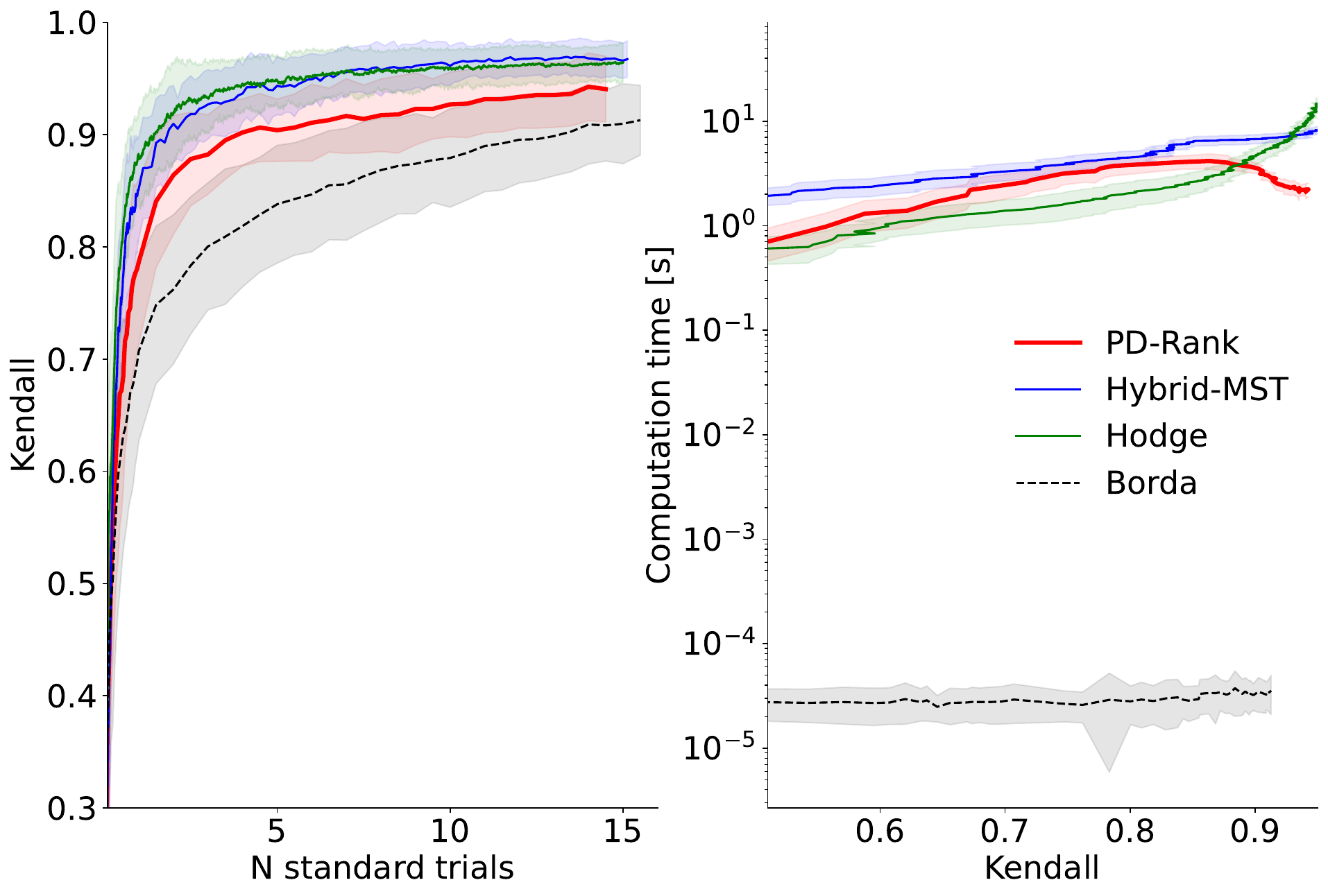}%
}

\subfloat[Reference 3]{%
\includegraphics[width=0.5\linewidth]{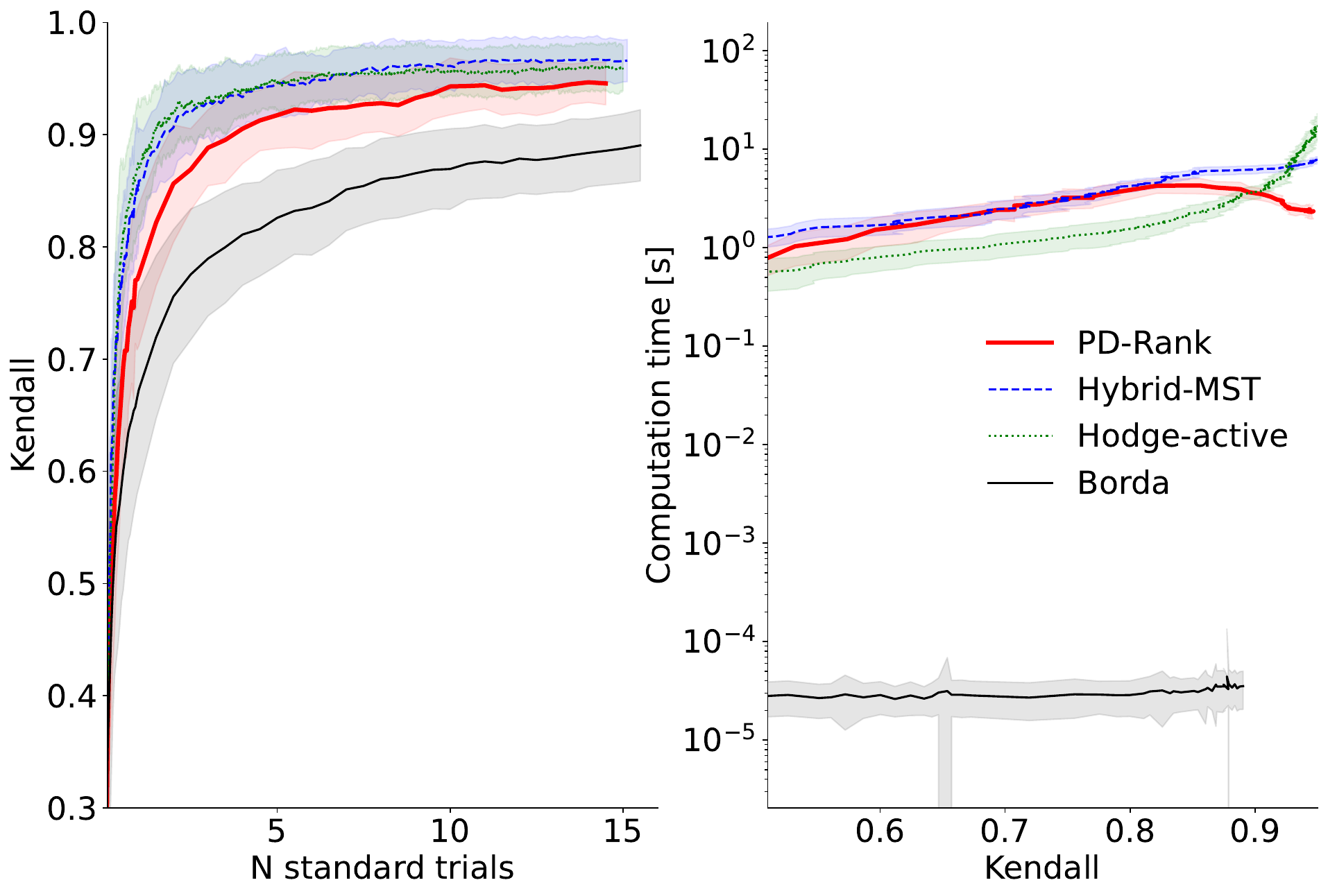}%
}
\subfloat[Reference 4]{%
\includegraphics[width=0.5\linewidth]{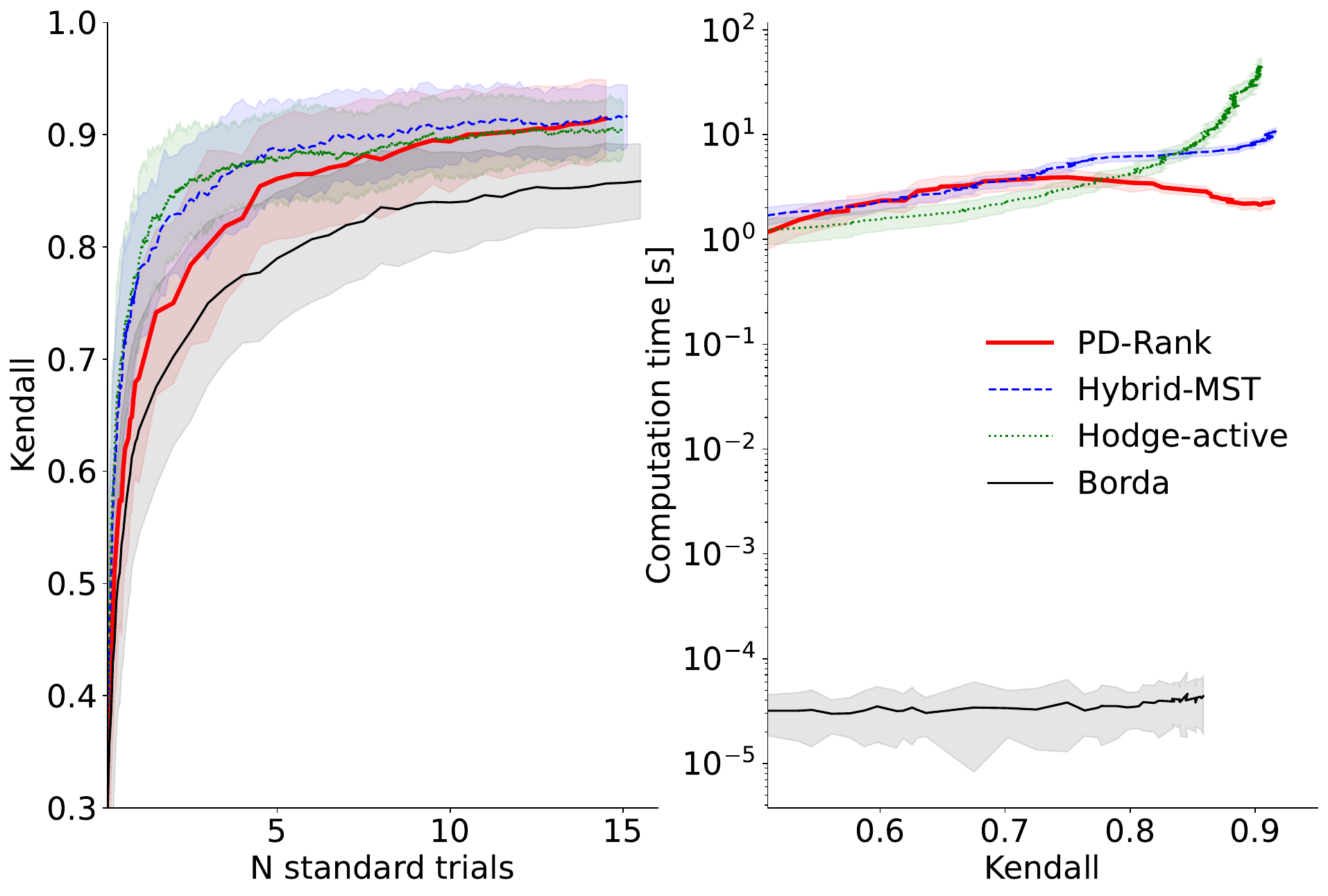}%
}

\caption{Additional results for simulations with IQA-dataset. For each subfigure, on the left plot, we can see the evolution of ranking accuracy with the observed samples. Observed samples are expressed in terms of standard trials, where \textbf{1 standard trial} is defined as $m(m-1)/2$, for $m$ items. On the right, we show the evolution of computation time with the achieved Kendall coefficient: lines above PD-Rank require more time for the same accuracy and vice-versa.}
\label{fig:IQA}
\end{figure*}

\end{document}